\documentclass[runningheads]{llncs}
\usepackage[T1]{fontenc}
\usepackage{graphicx}
\usepackage{hyperref}
\usepackage{color}

\usepackage{listings}
\usepackage{amssymb,latexsym,mathpartir,ifthen}
\usepackage{thm-restate}

\renewcommand{\phi}{\varphi}

\newcommand{\CN}{CN}
\newcommand{\Claude}{CC} 

\lstdefinelanguage{TOY}[ANSI]{C}{%
  morekeywords={assert,assume,         bool,true,false,input,output,skip },
  columns = fullflexible,
  moredelim=[is][\fontfamily{cmtt}\selectfont\textcolor{vccOutColor}]{/*`}{`*/},
  morecomment=[is]{//--}{//--},
  basicstyle=\footnotesize\rmfamily,  
  breaklines=true,
  literate= 
  {\\quad}{{$\quad\:$}}1
  {\\TT}{{$\TT\:$}}1
  {\\FF}{{$\FF\:$}}1
  {\\implies}{{$\implies$}}1
  {\\Agr}{{$\Agr\:$}}1
  {\\land}{{$\land\:$}}1
  {\\Phi}{{$\Phi$}}1
  {\\Both}{{$\Both\:$}}1
  {\\forall}{{$\forall{}$}}1
  {\\exists}{{$\exists{}$}}1
  {\\lambda}{{$\lambda{}$}}1,
}

\newcommand{\code}[1]{\mbox{\lstinline[basicstyle={\footnotesize\sffamily}]{#1}}}

\newcommand{\codef}[1]{\textsf{\footnotesize#1}}

\newcommand{\dt}[1]{\emph{\textbf{#1}}} 

\newcommand{\trans}[1]{\stackrel{#1}{\rightarrow}} 
\newcommand{\rconfig}[1]{\langle #1\rangle} 
\newcommand{\config}[2]{\langle #1,\: #2\rangle} 

\newcommand{\initState}{\sigma_0}

\newcommand{\events}{\Downarrow} 

\newcommand{\hbra}{
  \hbox to \columnwidth{\vrule width0.3mm height 1.8mm depth-0.3mm
    \leaders\hrule height1.8mm depth-1.5mm\hfill
    \vrule width0.3mm height 1.8mm depth-0.3mm}}
\newcommand{\hket}{
  \hbox to \columnwidth{\vrule width0.3mm height1.5mm
    \leaders\hrule height0.3mm\hfill
    \vrule width0.3mm height1.5mm}}

\newenvironment{tdisplay}[1]{\medskip  
    \noindent\textbf{\normalsize #1}\\[-.3ex]
    \hbra\\[-.4ex]
  }{\par\hket
  \medskip}

\newcommand{\cat}{\mathbin{\cdot}} 
\newcommand{\update}[3]{(#1\, |\, #2:\, #3)} 

\newcommand{\last}{\mathsf{last}} 

\newcommand{\vis}{\mathsf{vis}} 
\newcommand{\inputs}{\mathsf{inp}}  
\newcommand{\state}{\mathsf{sta}}  
\newcommand{\redex}{\mathsf{redex}} 
\newcommand{\frun}{\mathsf{frun}} 

\newcommand{\Ve}{\mathsf{Ev}} 
\newcommand{\VIn}{\mathsf{In}} 
\newcommand{\VOut}{\mathsf{Out}} 
\newcommand{\RP}{\mathsf{RP}} 
\newcommand{\SC}{\mathsf{SC}} 

\newcommand{\know}{\mathsf{k}} 
\newcommand{\pknow}{\mathsf{k}_{\to}} 

\newcommand{\Ein}{\mathsf{in}}
\newcommand{\Eout}{\mathsf{out}}
\newcommand{\Etick}{\bullet}

\newcommand{\fs}{\mathit{fs}}

\newcommand{\aftqua}{\mbox{\large.}\;}

\newcommand{\some}[2]{\exists #1 \aftqua #2}

\newcommand{\len}[1]{|#1|}

\newcommand{\TT}{\mathsf{tt}}
\newcommand{\FF}{\mathsf{ff}}

\newcommand{\scolon}{\mathord{:}} 

\newcommand{\imp}{\Rightarrow}

\newcommand{\Lat}{\mathcal{L}} 
\newcommand{\lleq}{\sqsubseteq} 
\newcommand{\nlleq}{\not\sqsubseteq} 
\newcommand{\Z}{\mathbb{Z}} 
\newcommand{\N}{\mathbb{N}} 
\newcommand{\prefixeq}{\preceq}
\newcommand{\prefix}{\prec}
\newcommand{\suffixeq}{\succeq}

\newcommand{\Agr}{\ensuremath{\mathbb{A}}}
\newcommand{\AgrCh}[2]{\ensuremath{\mathbb{A}#1\mbox{\small@}#2}} 

\newcommand{\Both}{\ensuremath{\mathbb{B}}}
\newcommand{\implies}{\Rightarrow}
\newcommand{\union}{\mathbin{\cup}}
\newcommand{\intersect}{\mathbin{\cap}}

\newcommand{\dom}{\mathord{dom}}
\newcommand{\rng}{\mathord{rng}}

\newcommand{\take}{\mathord{\downharpoonright}} 

\begin{document}

\title{Assuming You Knew:  
Fixing an Epistemic Semantics for Flow
Policies Using Agentic AI}

\titlerunning{Assuming You Knew}

\author{David A. Naumann\orcidID{0000-0002-7634-6150}
  \thanks{This work was partially supported by NSF SaTC award CNS 2426414.}}

\authorrunning{\ }

\institute{Stevens Institute of Technology, USA}

\maketitle  

\begin{abstract}
Many high-level security requirements are about the allowed flow of information in programs and are difficult to make precise because they involve selective downgrading.
Notions from epistemic logic have emerged as a good approach to policy semantics but a robust general framework remains elusive.  
A paper appearing in CSF 2018,
entitled ``Assuming You Know: Epistemic Semantics of Relational Annotations for Expressive Flow Policies'', attempted to provide a unifying framework---but the formalization was sketchy and a correction was announced during the conference presentation.
With aid from an agentic AI coding assistant, a corrected formalization has been machine checked in the Rocq proof assistant.
The simplicity and generality of the framework may help  compare different policy specification styles and enforce them by leveraging existing techniques.

\keywords{information flow security \and
epistemic logic \and program annotations \and relational verification \and agentic AI coding assistants}
\end{abstract}

\section{Introduction}\label{sec:intro}

Security and privacy requirements for computing systems restrict the flow of information and involve subtle tradeoffs such as 
utility versus privacy
and strength of assurance versus cost of system development and enforcement.
The malleability of data, control, and communication in computing makes it difficult to precisely model information and its flow as needed for high assurance, but much has been learned through decades of research and practice.
An established and influential approach to formalization is  epistemic~\cite{AskarovS07,balliu2011epistemic}.  Confidentiality requirements are formulated as what an observer at a given level of privilege is permitted to know about sensitive information.  This can enable expression of downgrading policies with state/event-dependent conditions---when can who learn what information derived from secrets.  

There have been a number of technical developments 
of the epistemic approach for confidentiality and integrity,
formulated for program models including batch, reactive, concurrent, and higher order.  
A key criterion is for it to be possible for stakeholders to audit candidate policy specs \emph{extensionally}, that is,
in terms of program behavior independent from the code of the program in question.  This is in tension with the convenience to specify downgrading by some form of program construct or annotation, which has two benefits: facilitating enforcement (e.g., by type systems) and expressing policies where program state and control point clearly represents the conditions or events under which downgrading is intended.  

In a paper we refer to as \dt{\CN}~\cite{ChudnovN18},
the authors\footnote{Apropos authorship: the authors of \CN\ collaborated on high level ideas and examples but the second author is to blame for the formal development, especially its flaws.
}
present a formalization for a reactive program model where policies are expressed using \emph{assume} statements in the code but a derived \emph{release policy} is expressed solely in terms of the input stream.  This was intended to address the aforementioned tension and explore how an observation-oriented epistemic security property can be enforced via reasoning about program annotations, specifically relational assertions and assumptions.  The main result was a theorem stating that security follows from so-called \emph{safety}, where the latter is a program-oriented property amenable to runtime monitoring and proof by induction on execution steps.  Unfortunately, as announced during the paper's presentation at CSF'18, there is a significant flaw in the published version's proof.  A corrected definition of release policy and the proof was sketched at the time~\cite{ChudnovN18}. This paper works out the details and makes further revisions culminating in a fully machine checked proof of the main result using the Rocq proof assistant~\cite{Rocq820}.

Murray et al~\cite{murray2023assume} adapt the approach of \CN\ to concurrent C programs, featuring a policy language separate from the code and a clear separation between verification that the code satisfies the relevant annotations (policy-agnostic) and confirmation that a separately given policy spec is consistent with the code annotations. 
The theory of \cite{murray2023assume} has been machine checked, providing high assurance for an embodiment of the approach; but the security property is constant-time, a very strong property that trivializes the complications aligning multiple executions in reasoning about epistemic properties.  

In brief, our contributions here are as follows.
(a) An epistemic semantics that revises the one in \CN\ but still makes explicit the semantics of release policy as such.
(b) A slightly revised formulation of the safety condition from \CN, and machine checked proof that safety implies security.
(c) Remarks on our experience using an agentic AI coding assistant to develop the Rocq proof.

Machine checking seems essential for verification of practical systems, with all the complexities of their programming languages and platforms, and even for high level designs~\cite{BasinFMNNSZZ25}.
The story of this paper is one of many instances where even a careful theoretical development in an idealized setting has mistakes. 
Our machine checked result confirms that the announced corrections of \CN\ are basically correct but machine checking uncovered details that were not correct.  Although the program model and main definitions are fairly simple, some proofs are intricate as is often the case when reasoning about multi-run properties.  

\emph{Outline:}
Section~\ref{sec:examp} uses a simple example to introduce the programming and specification notations.  The example is taken from \CN\ and throughout this document we have borrowed liberally from the text of that paper.  We refer the reader to that paper for more extensive discussion of the appoach and further examples.
The formal development begins in Section~\ref{sec:examp} with programs and their annotations.
Throughout the paper we use standard math style.  The intent is to faithfully render the machine checked develpment in a readable way.
Section~\ref{sec:knowl} defines release policy and security,
Section~\ref{sec:safe} defines safety and 
Section~\ref{sec:implic} gives the main theorem.
Section~\ref{sec:related} discusses related works.
Section~\ref{sec:remarks} describes how the agentic tool  
Claude Code~\cite{claudecode2026} was guided to translate from the \LaTeX\  file of the corrected \CN\ 
to the Rocq specs and to complete the proofs.  
The original proof sketches have been updated for the revised definitions;
for lack of space they are relegated to an appendix. 

\section{Declassification by example}\label{sec:examp}

We introduce our specifications with an example, written in a small imperative reactive language inspired by~\cite{BohannonPSWZ09}.
We add relational assertions and assumptions, inspired by~\cite{Chudnov14}, for specifying information flow requirements. 

Our example is a randomized response sampling algorithm (Figure~\ref{fig:randomized-response}). The purpose of the algorithm is to provide differential privacy and plausible deniability to the respondents to a sensitive yes/no question, preserving the statistical characteristics of the underlying data~\cite{dwork2014algorithmic}. The idea is that a respondent's answer may or may not be used depending on a random coin flip. 
We use three communication channels, Private ($P$) input that feeds binary private responses to the sampler, Random ($R$) input providing a source of secure random bits, and Public ($L$) output, that expects the anonymized responses. Programs consist of input handlers, one for each input channel. 
Variables including handler parameters are initialized to 0 at system start.
Handlers run to completion and optionally produce output. 
The first handler ($\codef{input}^P$) is the input sampler. The second handler ensures the first always has two fresh random bits.  We source randomness from a single channel and thus have to ensure there are two fresh random bits, stored in variables $r_1$ (for the coin flip) and $r_2$ for the randomized reply.
This contrived program is intended to be used in an environment where two bits of randomness are sent prior to each response bit.  Loss of additional random bits would not matter but dropping response bits would be unsatisfactory.

We can ensure security of this program by enforcing the following information flow policy. Private data is only disclosed to the public \emph{under the condition} that $r_1$ is true. We can formalize this policy by considering channel identifiers to be labels and arranging the labels in a partial order, such that for any labels $\ell_1,\ell_2$ related by the ordering $\ell_1\lleq\ell_2$ information from the input channel with a label $\ell_1$ is allowed to flow to the output channel with a label $\ell_2$. 
The attack model is a standard one. Principals are labeled.  The principal at level $\ell$ can provide inputs and observe outputs at all levels $\ell'\lleq\ell$.  The program is known to all.

In the example, the labels are ordered  such that $L \lleq P$ and $L \lleq R$. This baseline policy disallows direct disclosure of the both the randomness and the responses to the public observer. However, it also disallows the intended disclosure. We resolve this issue by relaxing the policy in two places in the program using downgrading \emph{annotations} 
in Figure~\ref{fig:randomized-response}. The annotation $\codef{assume}^L\: \Agr r$ declassifies the received bit $r$ to the public label $L$.
The notation alludes to the standard two-run semantics of dependency: the \dt{agreement} formula $\Agr r$ says a pair of states agree on the value of $r$.  So the policy says if two runs agree on $r$ then outputs on $L$ also agree.

The annotation $\codef{assume}^L\Both r_1 \implies \Agr p$ effectively relabels the private response bit $p$ with the label $L$, but only in the case the value of $r_1$ in that program execution is true. 
The formula $\Both r_1$ says $r_1$ is true (non-zero) in both runs.  

\begin{figure}[ht]
  \(  \begin{array}{ll}
      \codef{input}^{P}p. &\codef{if}\: ready > 1 \: \codef{then} /* user */  \\
                         & \quad \codef{assume}^L\: \Both r_1 \implies \Agr p; \\
                         & \quad \codef{if}\: r_1\: \codef{then}\: o:=p;\:\codef{else}\: o:=r_2;\:\code{fi}\\
                         & \quad \codef{output}^L \: o; \: ready:=0; \: \codef{fi} 
    \end{array}
   \)
   \quad
   \(
    \begin{array}{ll}
      \codef{input}^Rr.  &\codef{assume}^L\: \Agr r; /* randomness */ \\
                         &\codef{if}\: ready = 0 \: \codef{then}\: r_1 := r; \\
                         &\codef{else if}\: ready = 1\: \codef{then} \: r_2:= r;\: \codef{fi} \\
                         &ready:=ready+1; \: \codef{fi}
    \end{array}
   \)
  \caption{Randomized response sampling algorithm }
  \label{fig:randomized-response}
\end{figure}

Whether this is a reasonable program and policy depends on the statistical properties of the channel $R$. In general, the policy specification needs to be justified in terms of the quantitative notions of differential privacy. However, that also requires reasoning about the environment in which the program is run and the statistical properties of its inputs. 
In contrast, the specification in the code expresses what the observer is allowed to learn: the $R$ values and the $P$ values, but only under a certain condition. This can be understood in non-quantitative, epistemic terms as we formalize later. Together with the assumption of randomness of the $R$ inputs, this gives us the security guarantee.\footnote{This is not an isolated example of the issue: The security of the well-known in the literature password checking example, where only the outcome of the password test but not the password itself can be declassified, depends in a large part on the length of passwords and their distribution. 
A typical policy can be expressed, in our notation, as 
$\Agr numguesses \land (\Both (numguesses<10) \implies \Agr (guess=pass))$.
If the set of possible secret values is too small, even disclosing the result of this small number of tries is unacceptable.
}

These examples express downgrading in terms of current values of program variables, although only inputs and outputs are considered to be observable. 
In \CN\ we introduce as second form of agreement (written $\Agr \ell @ n$) for agreement on the $n$th value on input channel $\ell$, to express policy directly in terms of observables.
(Auxiliary state can also be used for that~\cite{BNR08b,murray2023assume}.)
An example of its use is a simple packet sniffer that logs the source IP addresses of packets while ensuring confidentiality of payloads.
The formalization here is abridged for brevity.

\section{Programs, annotations, and event traces}\label{sec:programs}

As shown in the example we formulate policy in terms of a fixed labeling of channels and a fixed may-flow relation on labels, together with annotations to express conditional flows that would not be allowed by the baseline policy.  This section begins the technical development by formalizing these building blocks.

We assume given a partially ordered set $\Lat$ of security levels, and write $\ell \lleq \ell'$ for the ordering.
\dt{Events} $t$ are given by the grammar
\[ t ::= \Ein^\ell n \mid \Eout^\ell n \mid \Etick 
\qquad \mbox{where } \ell\in \Lat, \: n\in \Z
\]
Identifiers $t,u,v$ range over events.  
The event $\Ein^\ell n$ represents the input of value $n$ on $\ell$'s input channel;
$\Eout^\ell n$ is for $\ell$'s output channel.
The \dt{tick} event $\Etick$ lets us associate 
an event with every transition.
An event $t$ is \dt{visible} at $\ell$ if $t$ is an input or output at some level $\ell'\lleq \ell$.
Let $\VIn^\ell = \{ \Ein^{\ell'} n \mid \ell'\lleq \ell \mbox{ and } n\in \Z \}$,
let $\VOut^\ell = \{ \Eout^{\ell'} n \mid \ell'\lleq \ell \mbox{ and } n\in \Z \}$
and $\Ve^\ell = \VIn^\ell \union \VOut^\ell$.
For a sequence $ts$ of events we write $\vis^\ell(ts)$ for the sub-sequence of $\ell$-visible ones,
i.e., those in $\Ve^\ell$.

\subsection{Programs}

A program consists of a set of input handlers, one for each security level.
The program consumes a sequence of inputs, handling each one in turn.
The syntax $ \codef{input}^\ell x .\, c $ is for handler associated with channel $\ell$, with body $c$, in which variable $x$ is initialized to the input value.
A handler may produce zero or more outputs, which may be on different channels from its input.
If a handler diverges, no further input is consumed.  
Programs are \dt{input total} in the sense that, unless the last handler is diverging, the next input event can be consumed.

\begin{tdisplay}{Handlers \hfill $\codef{input}^\ell x . \, c$}
\begin{tabular}{l@{$\:$}r@{$\;$}ll}
$e$ & $::=$ & $n \mid x \mid e \oplus e \mid \lnot e$ 
  \quad ($n\in\mathbb{Z},x\in\mathit{Vars},
    \oplus\in\{+,*,=,\mathord{\leq},\mathord{\land},\ldots\}$) 
    \quad
    & expression \\ 
$c$ & ::= & $ \codef{skip} \mid x:=e \mid \codef{output}^\ell~e  
      \mid \codef{if}~e~\codef{then}~c~\codef{else}~c \mid \codef{while}~e~\codef{do}~c \mid c \: ; c $ & command \\ 
    & $ \mid$ & $\codef{assert}^\ell~\Phi \mid \codef{assume}^\ell~\Phi $ &  
\end{tabular}
\end{tdisplay}

An \dt{$\ell$-annotation} is a command of the form $\codef{assert}^{\ell'}~\Phi$ or
$\codef{assume}^{\ell'}~\Phi$ such that $\ell'\lleq\ell$.
Annotations have no influence on the observable behavior of the program. 
The $\ell$-annotations that are assumptions specify what the observer at level $\ell$---seeing inputs and outputs on channels $\ell'\lleq\ell$---is allowed to learn.  
Assertions are only needed for enforcement (Section~\ref{sec:safe}).  
As shown by the introductory example, the handler for some level $\ell$ may have annotations 
for unrelated levels.

\begin{tdisplay}{Relational formulas $\Phi$ for annotations}
\begin{tabular}{l@{$\:$}r@{$\;$}ll}
$\phi$ & ::= & $e \mid  \phi \lor \phi \mid \neg\phi
              \mid \forall x.\phi$ \quad & state predicate \\[.5ex]
$\Phi$ & ::= & $\Agr e  \mid \Both \phi \mid \Both \phi \implies \Agr e $ & basic relational formula  \\[.5ex]
$\Phi$ & ::= & $ \Phi\land\Phi $ & conjunction of basic formulas
\end{tabular}
\end{tdisplay}
The form $\Both \phi \implies \Agr e $ can be used together with auxiliary variables to encode various forms of state dependent downgrading policies~\cite{BNR08b}.

\subsection{Program semantics}

In the rest of the paper, we consider a fixed program i.e.\ set of handlers,
which is left as an implicit parameter to streamline notation.

A \dt{state} $\sigma$ maps variables to integers.
At the start of a handler execution, the designated variable is set to the input value and all other variables are unchanged (i.e., variables are global to all handlers).
\dt{Configurations} are of two kinds.
A \dt{receptive} configuration $\rconfig{\sigma}$ contains just a state $\sigma $.
An \dt{active} configuration $\config{c}{\sigma }$ represents the execution of a handler,
where $c$ represents the current control and $\sigma $ is a state.
The initial configuration is $\rconfig{\initState}$ where $\initState$ maps all variables to $0$.
We write $g \trans{t} h$ to indicate that $g$ transitions to $h$ with associated event $t$.

\begin{tdisplay}{Transition semantics \hfill $g \trans{t} h$}
\vspace*{-1.5ex}
\renewcommand{\MathparLineskip}{\lineskiplimit=0.4em\relax\lineskip=0.5em plus 0.1em\relax}
\begin{mathpar}

\inferrule{ 
  \codef{input}^\ell x . c \mbox{ is the handler for $\ell$} \\
  \sigma' = \update{\sigma}{x}{n} 
 }{ \rconfig{\sigma}
    \trans{\Ein^\ell n}
    \config{c}{\sigma'}
}
\and 
\inferrule{}{
  \config{ \codef{skip} }{\sigma} \trans{\Etick} \rconfig{\sigma}
}
\and
\inferrule{\sigma(e) = n  }{ 
    \config{ \codef{output}^\ell~e }{\sigma} 
    \trans{\Eout^\ell n}
    \config{\codef{skip}}{\sigma}
}
\and
\inferrule{ 
 \sigma(e)\neq 0 
}{
 \config{ \codef{if}~e~\codef{then}~c~\codef{else}~d }{\sigma} 
 \trans{\Etick} \config{c}{\sigma}
}
\and
\inferrule{}{ 
  \config{ \codef{assume}^\ell~\Phi }{\sigma} \trans{\Etick} \config{\codef{skip}}{\sigma}
}
\and
\inferrule{}{ 
  \config{ x:=e }{\sigma} \trans{\Etick} \config{\codef{skip}}{\update{\sigma}{x}{\sigma(e)}}} 
\and
\inferrule{}{ 
  \config{ \codef{assert}^\ell~\Phi }{\sigma} \trans{\Etick} \config{\codef{skip}}{\sigma}
}

\and 
\inferrule{}{
  \config{ \codef{skip};c }{\sigma} \trans{\Etick} \config{c}{\sigma}
}
\and
\inferrule{ 
 \config{ c }{\sigma} \trans{t} \config{c'}{\sigma'}
}{
 \config{ c;d }{\sigma} \trans{t} \config{c';d}{\sigma'}
}
\end{mathpar}
\vspace*{-3ex}
\end{tdisplay}
The first rule consumes an input,
transitioning from a receptive configuration to an active one.
The updated state $\update{\sigma}{x}{n}$ maps $x$ to the input value $n$ and leaves the other variables unchanged.
The other rules all start from an active configuration,
and  produce either an output event or $\Etick$.
The second rule transitions to a receptive configuration from 
an active one in which the handler has terminated.
The remaining rules involve only active configurations.
In the rule for output we write $\sigma(e)$ for the value of expression $e$ in state $\sigma$.
Comparisons ($=,\leq$) and logical operators ($\land,\lnot$) return 1 or 0, 
and non-zero values are interpreted as true.
Omitted rules for assignment, loop, and conditional are standard,
as are the last two rules which are for sequencing.

If $c$ is different from $\codef{skip}$ 
then we can define $\redex(\config{c}{\sigma})$ to be the unique command $b$
such that $b$ is not a sequential composition and either 
$c\equiv b\,;d$ for some $d$, or $b\equiv c$. 
For example, in the penultimate rule the redex is $\codef{skip}$,
and $\redex(\config{(x:=0;y:=1);z:=2}{\sigma})$ is $x:=0$.
The last rule allows the redex in a sequence to take a step.

\begin{restatable}{lemma}{lemdeterm}\label{lem:determ} 
For every active configuration
there is a unique output or $\Etick$ event and unique successor configuration.
For every receptive configuration and every input event there is a unique
successor configuration for that event.  
\end{restatable}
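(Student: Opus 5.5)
The plan is to split the statement into its two parts, one for active and one for receptive configurations, and treat both as a case analysis on the syntax-directed rules of the transition semantics.

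\emph{Receptive configurations.} The only rule whose source is a receptive configuration $\rconfig{\sigma}$ is the input rule, and it fires only on an event $\Ein^\ell n$. Since the program has exactly one handler $\codef{input}^\ell x.\,c$ for each level $\ell$, the premise determines $x$ and $c$ uniquely. The updated state $\update{\sigma}{x}{n}$ is a function of $\sigma$, $x$ and $n$. Hence for a given input event there is exactly one successor configuration $\config{c}{\update{\sigma}{x}{n}}$, and it exists, which is the input-totality property.

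\emph{Active configurations.} Here I will argue by structural induction on the command $c$ in $\config{c}{\sigma}$, proving that there exist a unique event $t$ and a unique configuration $h$ with $\config{c}{\sigma}\trans{t}h$. The goal splits into existence and uniqueness, and both follow from the fact that for each shape of $c$ exactly one rule (or one pair of rules with mutually exclusive premises) has a conclusion matching it.

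\emph{Base cases.} For $\codef{skip}$, the only applicable rule gives $\Etick$ and $\rconfig{\sigma}$. No sequencing rule applies, because $\codef{skip}$ is not of the form $c;d$. For $x:=e$, $\codef{output}^\ell e$, $\codef{assume}$ and $\codef{assert}$, there is again a single rule, and its result depends only on $\sigma(e)$. Evaluation $\sigma(e)$ is a total function, defined by structural recursion on $e$, so the result is determined. For $\codef{if}$ and $\codef{while}$ there are two (omitted, standard) rules. Their side conditions $\sigma(e)\neq 0$ and $\sigma(e)=0$ are complementary, so exactly one applies, and it emits $\Etick$.

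\emph{Sequential composition.} This is the only case needing care, and the only place I expect any obstacle. For $c;d$, two rules have matching conclusions: the rule for $\codef{skip};c$, and the congruence rule with premise $\config{c}{\sigma}\trans{t}\config{c'}{\sigma'}$.
\begin{itemize}
\item If $c\equiv\codef{skip}$, the first rule applies. The congruence rule cannot apply, because its premise would need $\config{\codef{skip}}{\sigma}$ to step to an \emph{active} configuration, but by the base case it steps only to $\rconfig{\sigma}$.
\item If $c\not\equiv\codef{skip}$, only the congruence rule applies. By the induction hypothesis on $c$, there is a unique step $\config{c}{\sigma}\trans{t}h$, and I must check that $h$ is active. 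This holds by a small auxiliary fact: an active configuration steps to a receptive one only when its command is literally $\codef{skip}$. That fact is itself proved by the same induction, since no other rule produces $\rconfig{\cdot}$. Uniqueness of $t$ and $\config{c'}{\sigma'}$ then gives uniqueness of $t$ and $\config{c';d}{\sigma'}$.
\end{itemize}

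Equivalently, the argument can be phrased via $\redex$: the step is determined by the redex, and sequencing only wraps it.

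In Rocq this amounts to an \texttt{inversion} on both derivations, with the induction hypothesis applied to the sequence case.
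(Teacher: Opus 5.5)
Your proposal is correct and takes essentially the paper's approach. The paper dismisses Lemma~\ref{lem:determ} as ``direct from definitions,'' and your structural induction on the command, with case analysis on the rules, is exactly the routine argument that phrase abbreviates. Your auxiliary fact that only $\config{\codef{skip}}{\sigma}$ steps to a receptive configuration is the right way to handle the sequencing case: it excludes the congruence rule for $\codef{skip};d$ and ensures the congruence rule applies when $c\not\equiv\codef{skip}$.
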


\subsection{Program pre-runs and event traces}

Knowledge is defined in terms of the trace of events a program engages in, 
but specifications are expressed as program annotations, so we need to work with both 
computations and event traces.

A \dt{pre-run}\footnote{In the literature on epistemic logic~\cite{balliu2011epistemic} a ``run'' is a complete program execution; so we use the term pre-run as we are considering finite prefixes only.}
is a non-empty list of consecutive program configurations, 
starting with the initial configuration $\rconfig{\initState}$.
We use letters $g,h$ for configurations, and plural identifiers $gs,hs$ for finite sequences thereof---\dt{lists} for short.
Likewise we use $ts$ for lists of events, and sometimes $ins$ for lists of input events. 
We write $\len{gs}$ for the length of $gs$,
$[\,]$ for the empty list, and $\cat$ for catenation of lists---and also for appending an element at the start or end of a list.
We also treat lists as functions from an initial segment of the naturals.  So $gs_0$ is the first configuration and $\dom(gs)$ is the set of indices $0,\ldots,\len{gs}-1$.  We write $\last(gs)$ for $gs_{\len{gs}-1}$,
and $gs\take i$ for the first $i$ elements of $gs$ (i.e., the prefix up to but not including the element $gs_i$).
We write $gs\prefixeq gs'$ to say $gs$ is a prefix of $gs'$.

For any event list $ts$, let $\inputs(ts)$ be the subsequence of input events.
Every list $ins$ of input events gives rise to a unique execution which consumes the inputs or diverges trying to. 
Before formalizing this, we define the unique list of input and output events, called a \dt{trace}, associated with a pre-run.

\begin{tdisplay}{Event trace admitted by a pre-run \hfill $gs\events ts$}
\vspace*{-2ex}
\begin{mathpar}
\inferrule{}{ [ \rconfig{\initState} ] \events [\,] }
\and 
\inferrule{gs \events ts \\
\last(gs)\trans{t} g
}{ gs \cat g \events ts \cat t }
\end{mathpar}
\vspace*{-3ex}
\end{tdisplay}

So a pre-run is just a configuration list that admits a trace.
The pre-runs of the program under consideration comprise the set 
\( \{ gs \mid \some{ts}{ gs \events ts } \} \).
It is closed under nonempty prefixes but does not contain $[\,]$. 
In subsequent definitions and results, quantifications over pre-runs implicitly 
range over this set.

As an example consider the program with handlers
\[ 
\codef{input}^P x . \: \codef{output}^Q x+1 
\qquad\quad
\codef{input}^Q y . \: \codef{output}^Q w; w:=w+x+y 
\]
It has a pre-run which admits the trace
$ts = [\Ein^Q 3, \Eout^Q 0,\Etick,\Etick,\Etick, \Ein^P 1, \Eout^Q 2,\Etick]$.
This trace has inputs  $\inputs(ts) = [\Ein^Q 3, \Ein^P 1]$.
The pre-run has distinct states 
$(x\scolon 0,y\scolon 0,w\scolon 0)$,
$(x\scolon 0,y\scolon 3,w\scolon 0)$,
$(x\scolon 0,y\scolon 3,w\scolon 3)$,
$(x\scolon 1,y\scolon 3,w\scolon 3)$.
The second and third states are in the $Q$-handler and the last in the $P$-handler.

In light of determinacy (Lemma~\ref{lem:determ}), one may expect that a pre-run $gs$ determines a unique $ts$ 
with $gs\events ts$.  
There is a minor wrinkle for receptive configurations.
In the configuration following an input transition one 
can see from the handler body which channel's handler is running---unless two channels have 
identical handlers!  Starting in Section~\ref{sec:wellspec} we impose a mild condition that 
makes handler bodies distinct
and ensures the expected result (Lemma~\ref{lem:traceDetEv}).

\subsection{Well specified programs}\label{sec:wellspec}

Henceforth we require that the program under consideration satisfies the following condition.
A program is \dt{well specified} provided that for every $\ell$, with handler 
$\codef{input}^\ell x . c$, we have
(a) $c$ has the form $\codef{assume}^\ell~\Agr x \,; d$, for some command $d$; and 
(b) every output command $\codef{output}^{\ell'} e $ in $d$  
is immediately preceded in sequence by $\codef{assert}^{\ell'}~\Agr e $.
Let us call these \dt{boilerplate annotations}.  
They help streamline the formalization by encoding the baseline policy designated by channel labels and the ordering on labels.\footnote{For a practical notation one would bake the effect of these annotations into the semantics and explicit annotations would then be needed only for declassification.}
Boilerplate assumptions are used in the definition of release policy (Section~\ref{sec:release}).
Boilerplate assertions are used to make security enforceable (Section~\ref{sec:safe}).
The requirement loses no generality because 
the requisite annotations can be added without altering the behavior---except for 
adding $\Etick$ events, which are not observable.  

\begin{restatable}{lemma}{lemtraceDetEv}\label{lem:traceDetEv} 
If $gs \events ts$ and $gs \events ts'$ then $ts=ts'$.
\end{restatable}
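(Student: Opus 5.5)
We argue by induction on the derivation of $gs \events ts$, or equivalently on $\len{gs}$. In the base case $gs = [\rconfig{\initState}]$, and the only rule that applies is the first, because the second rule yields lists of length at least two. So $ts = ts' = [\,]$. In the inductive case $gs = hs \cat g$ with $\len{hs}\geq 1$. Both derivations must end with the second rule, so $ts = us\cat t$ and $ts' = us'\cat t'$ with $hs\events us$, $hs\events us'$, $\last(hs)\trans{t} g$ and $\last(hs)\trans{t'} g$. The induction hypothesis gives $us=us'$, and it remains to show $t = t'$.

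We split on the kind of $h = \last(hs)$. If $h$ is active, Lemma~\ref{lem:determ} says $h$ has a unique output or $\Etick$ event, and only such events label transitions from active configurations. Hence $t=t'$.

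If $h = \rconfig{\sigma}$ is receptive, both $t$ and $t'$ are input events, say $\Ein^\ell n$ and $\Ein^{\ell'} n'$. Inspecting the input rule, $g = \config{c}{\update{\sigma}{x}{n}}$ where $\codef{input}^\ell x.\,c$ is the handler for $\ell$. Likewise $g = \config{c'}{\update{\sigma}{x'}{n'}}$ where $\codef{input}^{\ell'} x'.\,c'$ is the handler for $\ell'$. Thus $c = c'$ syntactically.

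Well-specifiedness, clause (a), gives $c \equiv \codef{assume}^\ell~\Agr x\,;d$ and $c' \equiv \codef{assume}^{\ell'}~\Agr x'\,;d'$. Syntactic equality of commands forces $\ell=\ell'$ and $x=x'$. Then $\update{\sigma}{x}{n} = \update{\sigma}{x}{n'}$, and evaluating both states at $x$ gives $n=n'$. So $t=t'$.

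The one real point is this receptive case, where determinacy of the step relation does not directly give the event. It is exactly where the boilerplate assumption, which carries the label $\ell$ and the parameter $x$ in the handler body, makes handler bodies distinct. The rest is routine inversion on the two rules. One possible formal nuisance is ensuring inversion on the configuration list is unambiguous, namely that $hs\cat g = hs'\cat g'$ implies $hs=hs'$ and $g=g'$. This is standard for lists.
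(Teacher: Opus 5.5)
Your proof is correct and follows the same route as the paper's. Both argue by induction on $gs$, use Lemma~\ref{lem:determ} for steps from active configurations, and for an input step use the boilerplate assumption $\codef{assume}^\ell~\Agr x$ at the start of the well-specified handler body to recover the channel $\ell$ and the parameter $x$, and hence the input value from the successor state; you spell out the rule inversions that the paper leaves implicit.
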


In light of Lemma \ref{lem:traceDetEv} we can define the \dt{input history}, $\inputs(gs)$, 
of a pre-run $gs$.
It is simply $\inputs(ts)$ where $ts$ is the unique event list with $gs\events ts$.

Owing to the possibility of divergence, input totality does not imply that 
every input list can be fully consumed.
However, an input list produces a unique possibly infinite execution, so we can talk about the pre-runs that arise in consuming an input list $ins$.
The gist is that we build $gs$ by executing the handlers on $ins_0$, $ins_1$, \ldots, until either
a handler diverges or all inputs have been consumed.  
In order to describe $gs$ in a way that is unique in the divergence case, without recourse to infinite sequences,
the following result truncates a divergent pre-run just before the start of the divergent handler invocation.
If some input, say $ins_k$, causes its handler to diverge, 
that is described in terms of a series of finite prefixes of the diverging sequence,
each ending with steps of the handler for $ins_k$.

\begin{restatable}{lemma}{lemtraceFromInput} 
\label{lem:traceFromInput} 
For any list $ins$ of input events, there is a unique pre-run $gs$ such that
(a) $\inputs(gs) \prefixeq ins$,
(b) $\last(gs)$ is receptive, 
and (c) if $\inputs(gs) \neq ins$, i.e., $\inputs(gs)=ins\take k$ for some
$k < \len{ins}$,
then there is an infinite sequence $G$ of lists of active configurations
such that, for all $i,j$ we have: 
$\len{G_i}= i+1$, $gs\cat G_i$ is a pre-run,
$\inputs(gs\cat G_i) = ins\take(k+1)$,
and $i<j \imp G_i \prefixeq G_j$.
\end{restatable}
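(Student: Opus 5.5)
Existence and uniqueness are handled separately. Both rest on determinacy (Lemma~\ref{lem:determ}) and on Lemma~\ref{lem:traceDetEv}, which makes $\inputs(gs)$ well defined.

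\emph{Existence.} I would proceed by induction on $\len{ins}$, building a receptive pre-run $gs$ with $\inputs(gs)=ins$ for as long as possible. For $ins=[\,]$ take $gs=[\rconfig{\initState}]$, which satisfies (a), (b), and (c) vacuously. For $ins\cat u$, let $gs$ be the pre-run obtained for $ins$.
\begin{itemize}
\item If $\inputs(gs)\neq ins$, keep the same $gs$. Then $\inputs(gs)=ins\take k=(ins\cat u)\take k$, and the sequence $G$ witnessing (c) for $ins$ also witnesses it for $ins\cat u$.
\item Otherwise $\last(gs)$ is receptive, and $u=\Ein^\ell n$ gives a unique successor $h_0$ by Lemma~\ref{lem:determ}. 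From $h_0$, determinacy gives a unique maximal sequence of steps $h_0,h_1,\ldots$ through active configurations, stopping at the first receptive configuration if one is reached. The non-input steps emit only output and $\Etick$ events, so $\inputs$ of every intermediate prefix is $ins\cat u$.
\begin{itemize}
\item If a receptive configuration $h_m$ is reached, set $gs'=gs\cat h_0\cdots h_m$. It satisfies (a), (b), and $\inputs(gs')=ins\cat u$.
\item If not, every $h_i$ is active. Set $G_i=[h_0,\ldots,h_i]$ and $k=\len{ins}$; this gives condition (c) directly.
\end{itemize}
\end{itemize}
Defining $G$ requires no choice, only iteration of the successor function from Lemma~\ref{lem:determ}. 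In Rocq this is a fixpoint on $i$, together with a classical case split on whether a receptive configuration is ever reached.

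\emph{Uniqueness.} Suppose $gs$ and $gs'$ both satisfy (a)--(c). Since pre-runs are generated by determinate steps from $\rconfig{\initState}$, and input steps are determined by the input consumed, I first prove a comparability lemma: if $\inputs(gs)$ and $\inputs(gs')$ are prefix-comparable, then $gs\prefixeq gs'$ or $gs'\prefixeq gs$. The proof is by induction along the lists. At each index both lists are in the same configuration, and the next configuration is forced: by Lemma~\ref{lem:determ} at an active configuration, or by the next input of the common input history at a receptive one.

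Both input histories are prefixes of $ins$, so they are comparable. Without loss of generality $gs\prefixeq gs'$ with $gs\neq gs'$. The extra segment of $gs'$ starts at the receptive configuration $\last(gs)$, so it begins with an input step.
\begin{itemize}
\item If $\inputs(gs)=ins$, this consumes an input beyond $ins$, contradicting (a) for $gs'$.
\item Otherwise $\inputs(gs)=ins\take k$ and (c) gives the diverging sequence $G$. The extra segment of $gs'$ consumes $ins_k$ and then, by comparability again, must follow $G$. It reaches the receptive $\last(gs')$, but every element of every $G_i$ is active. This is a contradiction.
\end{itemize}

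The main obstacle I expect is this last step. One must show that the post-input configurations of $gs'$ coincide with some $G_i$. That requires the comparability lemma stated for extensions $gs\cat hs$ of a common prefix, rather than only for whole pre-runs, and some index bookkeeping linking $\len{G_i}=i+1$ to positions in $gs'$. The one-step determinism argument itself is routine.
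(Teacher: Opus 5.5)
Your proof is correct, but it is organized differently from the paper's. For existence the paper does not induct on $ins$. It uses a fuel-indexed run function $\frun(\rconfig{\initState},ins,n)$ over the whole input list and splits classically on two cases. If some fuel $n$ leaves the run shorter than $n$, then all of $ins$ is consumed and a receptive configuration is reached. Otherwise every fuel is used up, and the paper must locate $m,k$ such that the run has consumed $k$ inputs and sits at the receptive configuration preceding the non-terminating handler for $ins_k$; it then takes the $G_i$ as suffixes of longer fuel-bounded runs. Your induction on $ins$, with a classical split per handler invocation, gets that divergence point for free: it is the first input whose handler fails to return. Your $G_i=[h_0,\ldots,h_i]$ also has the required length $i+1$ by construction. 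What the paper's global $\frun$ buys instead is a single piece of infrastructure, starting from an arbitrary configuration, that is reused in strengthened forms in the proofs of Theorems~\ref{thm:exhaustive} and~\ref{thm:safesecure}.

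For uniqueness the paper only says ``using determinacy''; your comparability lemma is the precise content of that remark. The step you flag as the main obstacle is easier than you expect, and you need no version of the lemma for extensions of a common prefix. The list $gs\cat G_i$ is itself a pre-run with $\inputs(gs\cat G_i)=ins\take(k+1)\prefixeq ins$, so its input history is comparable with $\inputs(gs')\prefixeq ins$. Choose $i=\len{gs'}-\len{gs}-1$, which is nonnegative since $gs\prefixeq gs'$ and $gs\neq gs'$. Then the two lists have equal length, so comparability forces $gs'=gs\cat G_i$. Hence $\last(gs')$ would be active, contradicting (b).
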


\section{Knowledge semantics for declassification} \label{sec:knowl}

In this section we define what it is for a pre-run to be secure.   
Section~\ref{sec:baselineknowl} defines the notion of knowledge at a level $\ell$: What can be learned by observing channels visible to $\ell$ and reasoning about the program, which is known to the observer.
Section~\ref{sec:release} defines the notion of release policy at $\ell$, expressed by 
$\ell$-assumptions in the program, and defines security, which says that knowledge is
gained only in accord with release policy.  

\subsection{Knowledge}\label{sec:baselineknowl}

Conceptually, the $\ell$-observer decides what inputs to provide at levels $\ell'\lleq\ell$, and learns from observing outputs on channels at levels $\ell'\lleq\ell$.  
Following Beringer's~\cite{Beringer12} terminology,
the \dt{major} pre-run or trace is the actual execution and \dt{minor} ones are the possible executions 
about which the observer reasons.
For a given list $ts$ of events, the $\ell$-observer's knowledge of the inputs (on all channels)
is based on their observation of the $\ell$-visible inputs and outputs.

\begin{tdisplay}{Knowledge from a trace \hfill $\know^\ell$}
\vspace*{-2ex}
\[ \know^\ell(ts) = 
  \{ \inputs(us) \mid \some{gs}{ gs\events us \land \vis^\ell(ts) \prefixeq \vis^\ell(us) }  \} 
\]
\vspace*{-3ex}
\end{tdisplay}
The condition captures knowledge as follows.  Having observed $\vis^\ell(ts)$, the observer knows that the possible complete inputs are those that arise from some pre-run $gs$ of the program with trace $us$ that is consistent with what was seen so far.

\begin{restatable}{lemma}{lemknowlSame}\label{lem:knowlSame} 
(a) If $t$ is not $\ell$-visible 
then $\know^\ell(ts\cat t) = \know^\ell(ts)$.
(b) If $t\in\Ve^\ell$ then $\know^\ell(ts\cat t) \subseteq \know^\ell(ts)$.
\end{restatable}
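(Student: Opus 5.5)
Both parts reduce to how $\vis^\ell$ acts on a trace extended by one event, followed by unfolding the definition of $\know^\ell$. By definition of the visible subsequence, $\vis^\ell(ts\cat t) = \vis^\ell(ts)\cat t$ when $t\in\Ve^\ell$. Otherwise $\vis^\ell(ts\cat t) = \vis^\ell(ts)$. This is the only property of $\vis^\ell$ I will use. I would first state it as a small auxiliary fact, proved by unfolding $\vis^\ell$ as a filter over the list, or by induction on $ts$ if $\vis^\ell$ is defined recursively.

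For part (a), the key equation is $\vis^\ell(ts\cat t) = \vis^\ell(ts)$. The condition in the set comprehension defining $\know^\ell(ts\cat t)$ is then literally the same as the one defining $\know^\ell(ts)$, namely $\some{gs}{gs\events us \land \vis^\ell(ts)\prefixeq\vis^\ell(us)}$. The two sets are therefore equal by extensionality. This holds for every $us$, and no property of pre-runs or determinacy is needed.

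For part (b), take an arbitrary element $\inputs(us)$ of $\know^\ell(ts\cat t)$, witnessed by $gs$ with $gs\events us$ and $\vis^\ell(ts\cat t)\prefixeq\vis^\ell(us)$. Since $t\in\Ve^\ell$, the left side is $\vis^\ell(ts)\cat t$. We have $\vis^\ell(ts)\prefixeq\vis^\ell(ts)\cat t$, so by transitivity of $\prefixeq$ we get $\vis^\ell(ts)\prefixeq\vis^\ell(us)$. The same $gs$ and $us$ then witness membership of $\inputs(us)$ in $\know^\ell(ts)$.

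There is no real obstacle here. The only step needing care is the auxiliary fact about $\vis^\ell$ on a snoc. Its proof depends on the list representation: with the paper's convention that $\cat$ appends at the end, an induction on $ts$ uses the fact that filtering distributes over catenation. Decidability of membership in $\Ve^\ell$, i.e.\ of $\ell'\lleq\ell$, is implicitly assumed by the definition of $\vis^\ell$.
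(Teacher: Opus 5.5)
Your proposal is correct and matches the paper's proof: (a) follows directly from $\vis^\ell(ts\cat t)=\vis^\ell(ts)$ for non-visible $t$, and (b) reuses the same witnesses $gs,us$ via $\vis^\ell(ts)\prefixeq\vis^\ell(ts\cat t)\prefixeq\vis^\ell(us)$. Note that the first prefix step holds for every $t$, so (b) does not actually need the hypothesis $t\in\Ve^\ell$.
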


The difference $\know^\ell(ts)\setminus \know^\ell(ts\cat t)$ represents what is learned by observing output event $t$.
The baseline policy expressed by labels---noninterference---can be expressed by saying nothing is ever learned: 
for all $ts$ and all output events $t$, $\know^\ell(ts)\setminus \know^\ell(ts\cat t) = \emptyset$ or equivalently 
$\know^\ell(ts\cat t) \supseteq \know^\ell(ts)$.

The notion of knowledge defined above is progress-sensitive:
$\know^\ell(ts)$ excludes input lists that drive the program to divergence before the last input is handled.
In reality, one cannot observe the absence of progress---only stronger properties such as the passage of a fixed amount of time.  Moreover, enforcement of progress-sensitive security can be costly and restrictive.
So we aim for a progress-insensitive security property.
A simple way to formulate such a property is to entirely disregard diverging pre-runs
(as in \cite{AskarovS07,LiZhang22}),
but this fails to address the security of visible events prior to divergence.
A more nuanced approach (e.g., \cite{AskarovSabelfeld09}) formulates security in a way that essentially declassifies, at each step of execution, the fact whether the program will diverge without producing or 
enabling the consumption of further visible events.  
In this approach one formulates a notion of \dt{progress knowledge} $\pknow^\ell$ defined 
like $\know^\ell$ except that in addition the observer knows there will be another visible event.

\begin{tdisplay}{Progress knowledge \hfill $\pknow^\ell$}
\vspace*{-3ex}
\[ \pknow^\ell(ts) = 
  \{ \inputs(us\cat t\cat vs) \mid
         t\in\Ve^\ell \land \some{hs}{hs\events us\cat t\cat vs \land \: \vis^\ell(ts) \prefixeq \vis^\ell(us) } \}
\]
\vspace*{-4ex}
\end{tdisplay}

\begin{restatable}{lemma}{lemknowPknow}\label{lem:knowPknow} 
(a) $\pknow^\ell(ts) \subseteq \know^\ell(ts)$, and 
(b) if $t\in\Ve^\ell$ then
$\know^\ell(ts\cat t) \subseteq \pknow^\ell(ts)$.
\end{restatable}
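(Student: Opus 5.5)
Both parts follow by unfolding the definitions of $\know^\ell$ and $\pknow^\ell$ and exhibiting a suitable witness pre-run. The only auxiliary facts needed are that pre-runs are closed under nonempty prefixes, that the admitted trace of a prefix is the corresponding prefix of the trace, and that $\vis^\ell$ is monotone for $\prefixeq$ and distributes over catenation.

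For (a), take an element $\inputs(us\cat t\cat vs)$ of $\pknow^\ell(ts)$, witnessed by $hs$ with $hs\events us\cat t\cat vs$, $t\in\Ve^\ell$, and $\vis^\ell(ts)\prefixeq\vis^\ell(us)$. We use $hs$ itself as the witness for $\know^\ell(ts)$, with trace $us\cat t\cat vs$. It remains to check $\vis^\ell(ts)\prefixeq\vis^\ell(us\cat t\cat vs)$. This holds by transitivity of $\prefixeq$, since $\vis^\ell(us)\prefixeq\vis^\ell(us)\cat\vis^\ell(t\cat vs)=\vis^\ell(us\cat t\cat vs)$. No prefix-closure argument is needed here, because the same pre-run serves.

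For (b), take $\inputs(ws)\in\know^\ell(ts\cat t)$, witnessed by $gs\events ws$ with $\vis^\ell(ts)\cat t\prefixeq\vis^\ell(ws)$; the left side equals $\vis^\ell(ts\cat t)$ because $t\in\Ve^\ell$. The visible projection of $ws$ therefore contains $t$ at position $\len{\vis^\ell(ts)}$. Let $j$ be the index in $ws$ of that occurrence, namely the position of the $(\len{\vis^\ell(ts)}+1)$-th $\ell$-visible event. Split $ws = us\cat t'\cat vs$, where $us = ws\take j$ and $t'=ws_j$.
\begin{itemize}
\item By construction $t'=t\in\Ve^\ell$.
\item $\vis^\ell(us)$ consists of exactly the first $\len{\vis^\ell(ts)}$ visible events of $ws$, so it equals $\vis^\ell(ts)$; in particular $\vis^\ell(ts)\prefixeq\vis^\ell(us)$.
\end{itemize}
Taking $hs=gs$ gives $hs\events us\cat t\cat vs$. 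Hence $\inputs(ws)=\inputs(us\cat t\cat vs)\in\pknow^\ell(ts)$.

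The main obstacle is the bookkeeping in (b): justifying the split of $ws$ at the right visible event. I would prove a small auxiliary lemma by induction on $ws$: if $xs\cat t\prefixeq\vis^\ell(ws)$ with $t\in\Ve^\ell$, then there exist $us,vs$ with $ws=us\cat t\cat vs$ and $\vis^\ell(us)=xs$. The induction case appends an event $u$ to $ws$ and splits on whether $u$ is visible and whether the prefix already lies within $\vis^\ell(ws)$. Neither determinacy (Lemma~\ref{lem:determ}) nor trace uniqueness (Lemma~\ref{lem:traceDetEv}) is needed, since the witness pre-run is reused unchanged.
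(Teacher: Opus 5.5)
Your proof is correct and follows the paper's argument. In (a) you reuse the same witness pre-run with its full trace; in (b) you reuse the witness $gs$ and split its trace at the visible occurrence of $t$ to instantiate $\pknow^\ell(ts)$. Your explicit choice of the $(\len{\vis^\ell(ts)}+1)$-th visible event as the split point is exactly what guarantees $\vis^\ell(ts)\prefixeq\vis^\ell(us)$, a detail the paper's sketch leaves implicit; the prefix-closure facts you list at the start are, as you later note yourself, never actually used.
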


Progress-insensitive noninterference definitions in the literature are roughly equivalent to 
$\know^\ell(ts\cat t)\supseteq \pknow^\ell(ts)$ holding for all $ts$ and all visible outputs $t$.
We cannot do an exact comparision with, say, 
ID security of Bohannon et al.~\cite{BohannonPSWZ09}
because our model is not exactly the same.

\subsection{Release policy and security}\label{sec:release}

A relational formula $\Phi$ is interpreted as a relation on states.
(In \CN\ the states are paired with input lists, to interpret 
channel agreements.)

\begin{tdisplay}{Semantics of relational formulas 
\hfill $\sigma \mid \sigma '\models \Phi$}
\vspace*{-3ex}
\[\begin{array}{l@{\quad}c@{\quad}l}
\sigma | \sigma' \models \Agr e & \mbox{iff} & \sigma (e)=\sigma '(e) 
\\
\sigma | \sigma' \models \Both \phi & \mbox{iff} & 
\sigma \models \phi \mbox{ and } \sigma' \models \phi
\\

\sigma | \sigma' \models \Both \phi \!\implies\! \Agr e  
 & \mbox{iff} &  \sigma | \sigma' \models \Both \phi 
  \mbox{ implies } \sigma | \sigma' \models \Agr e   \\

\sigma | \sigma' \models \Phi_0\land\Phi_1 
 & \mbox{iff} & 
  \sigma | \sigma' \models \Phi_0 \mbox{ and } 
  \sigma | \sigma' \models \Phi_1 

\end{array}
\]
\noindent 
$g \mid h\models \Phi $ iff 
$\state(g)\mid \state(h)\models \Phi $
(where $\state$ projects the state of a configuration)
\vspace*{-3ex}
\end{tdisplay}

Next we define the release policy $\RP^\ell(gs)$ for a pre-run $gs$.  Like knowledge, it is a set of input lists.
The policy says the $\ell$-observer is allowed to know the declassifying $\ell$-assumptions were reached as well as the information they convey (the value of expressions in agreement).
First some technical notions.

\begin{tdisplay}{Alignment, semi-proper alignment for $\ell$, coverage}
For pre-runs $gs,hs$, an \dt{alignment} from $gs$ to $hs$ 
is a relation $\alpha \subseteq \dom(gs)\times \dom(hs)$ that 
\textbf{(a)} is monotone, i.e., $\forall i,j,k,l$ with $i \alpha j$ and $k\alpha l$, $i<k \imp j\leq l$ and $j<l \imp i\leq k$; and \textbf{(b)} has prefix-closed domain and range, i.e.,
$i\in \dom(\alpha)$ (resp.\ $rng(\alpha)$) and $0\leq j<i$ imply $j\in \dom(\alpha)$ (resp.\ $\rng(\alpha)$).

A \dt{semi-proper alignment for $\ell$}  from $gs$ to $hs$ is an alignment $\alpha$ such that 
for all $i,j$ with $i\alpha j$,
if either $\redex(gs_i)$ or $\redex(hs_j)$ is an $\ell$-assumption
then $\redex(gs_i) = \redex(hs_j)$.

A \dt{semi-$\ell$-aligned pre-run pair} is a triple $(gs,hs,\alpha)$ where $\alpha$ is a semi-proper alignment for $\ell$ from $gs$ to $hs$.

For $\alpha$ to \dt{cover} the major pre-run $gs$
(resp.\ the minor pre-run $hs$) 
means that $\dom(\alpha) = \dom(gs)$ 
(resp.\ $\rng(\alpha)=\dom(hs)$).
\end{tdisplay}
The condition $\redex(gs_i) = \redex(hs_j)$ is meant to express that $gs_i$ and $hs_j$ are 
both active configurations, with exactly the same assumption: not just the same level and formula but 
exactly the same occurrence in the program text, i.e., point in control flow.  (To make that precise one can provide each annotation occurrence with a unique identifying label.)

\begin{tdisplay}{Semi-conformance \hfill $\SC^\ell$}
A \dt{semi-conformance at $\ell$} is a semi-$\ell$-aligned pre-run pair $(gs,hs,\alpha)$ where 
$\alpha$ covers both $gs$ and $hs$, and for all $i,j$, if $i\alpha j$ 
and $\redex(gs_i)$ is an $\ell$-assumption $\Phi$ then
$gs_i \mid hs_j \models\Phi$.
We write $\SC^\ell(gs,hs,\alpha)$ in this case.
\end{tdisplay}

The key definition makes explicit the semantics of declassification policy as expressed by assumptions.

\begin{tdisplay}{Release policy for a pre-run \hfill $\RP^\ell$}
\vspace*{-2ex}
\[ \RP^\ell(gs) = \{ \inputs(us) \mid
                     \some{hs,\alpha}{\SC^\ell(gs,hs,\alpha)\land hs \events us } 
                  \}
\]
\vspace*{-4ex}
\end{tdisplay}

Owing to the boilerplate assumptions and coverage conditions, $\RP^\ell(gs)$ only includes $ins$ such that
$\vis^\ell(ins) = \vis^\ell(\inputs(gs))$.  Owing to non-boilerplate assumptions it includes only $ins$ from pre-runs that agree with respect to declassified expressions,
which may have been influenced by non-observable inputs (e.g., the $L$-assumption in the $P$-handler in Figure~\ref{fig:randomized-response}).

\begin{tdisplay}{Secure pre-run}
A pre-run $gs$ is \dt{secure} at level $\ell$ provided that for every prefix
$hs\cat g \prefixeq gs$ (with $hs$ nonempty) 
the step to $g$ is secure at level $\ell$.
For the step to be secure at $\ell$ means that, for all $ts,t$ if $hs\events ts$ and $hs\cat g \events ts \cat t$ then 
\begin{equation}\label{eq:secure}
\know^\ell(ts\cat t) \supseteq \pknow^\ell(ts) \intersect \RP^\ell(hs\cat g)
\end{equation}
\vspace*{-4ex}
\end{tdisplay}
Recall that $ts,t$ are uniquely determined by $hs,g$ (see Lemma~\ref{lem:traceDetEv}).
For $t\notin\Ve^\ell$, (\ref{eq:secure}) always holds
because $\pknow^\ell(ts) \subseteq \know^\ell(ts) = \know^\ell(ts\cat t)$
by Lemmas~\ref{lem:knowPknow}(a) and~\ref{lem:knowlSame}(a).
For $t\in\Ve^\ell$, always 
$\know^\ell(ts\cat t) \subseteq \pknow^\ell(ts)$ by Lemma~\ref{lem:knowPknow}(b),
so (\ref{eq:secure}) says that $\know^\ell(ts\cat t)$ is no smaller---the learning no greater---than allowed by $\RP^\ell$.

\section{Safety and Security} \label{sec:safe}

In this section we define a notion called safety,\footnote{Apropos the word `safety', it will be defined in terms of major/minor pre-run pairs in a way akin to 2-safety, in the terminology of trace and hyperproperties~\cite{ClarksonSchneiderHyper10}, but technically our safety condition is a property of pre-runs paired with input lists.}
adapted from Chudnov et al.~\cite{Chudnov14},
which connects a major pre-run to minor pre-runs with alternate input histories.
This makes no reference to observations or release policy but instead directly 
interprets annotations in terms of aligned steps of the major and minor pre-run.

\subsection{Safety}

In \cite{Chudnov14} the concern is to account for all possible initial states.
Here we aim to account for all possible input lists and their associated
pre-runs.  This account is based on a classification: Given a (major) pre-run
$gs$ and an input list $ins$, the (minor) pre-run associated with $ins$ may
diverge or violate an assumption, in which case it can be disregarded---we call
those `fiats'.  Alternatively, it may be in conformance with policy, or violate
policy due to a mis-aligned annotation (called alignment failure) or due to an
assertion failure.  The idea is that $gs$ is safe if there are no
alternate pre-runs resulting in assertion or alignment failure.

A \dt{proper alignment for $\ell$}  from $gs$ to $hs$ is an alignment $\alpha$ such that 
for all $i,j$ with $i\alpha j$,
if either $\redex(gs_i)$ or $\redex(hs_j)$ is an $\ell$-annotation
then $\redex(gs_i) = \redex(hs_j)$.
An \dt{$\ell$-aligned pre-run pair} is a triple $(gs,hs,\alpha)$ where $\alpha$ is a 
proper alignment from $gs$ to $hs$ for $\ell$

\begin{remark}\label{rem:one}
The following definitions take advantage of a fine point about the transition semantics.
It is defined so that the redex in the step following an annotation is skip.
If $\redex(gs_i)$ is an annotation then $\redex(gs_{i-1})$ and 
$\redex(gs_{i+1})$ are not annotations.
\end{remark}

\begin{remark}
In any proper alignment $\alpha$ for $\ell$, if $i$ is the index of the $n$th $\ell$-annotation in $gs$ and $j$ is the index of the $n$th $\ell$-annotation in $hs$ then $i\alpha j$ and $\alpha$ does not relate $i$ (on left) or $j$ (on right) to anything else.
\end{remark}

\begin{tdisplay}{Classification of aligned pre-run pairs}
The following notions are parameterized on a given level $\ell$ and input list $ins$. 

A \dt{conformance} is an $\ell$-aligned pre-run pair $(gs,hs,\alpha)$ where 
$\inputs(hs)\prefixeq ins$, $\alpha$ covers $hs$, 
and for all $i,j$, if $i\alpha j$ 
and $\redex(gs_i)$ is an $\ell$-annotation with formula $\Phi$ then
\begin{equation}\label{eq:conform}
gs_i \mid hs_j \models\Phi
\end{equation}
Moreover either $\alpha$ covers $gs$ or $\inputs(hs)=ins$ and $\last(hs)$ is receptive.

An \dt{assumption fiat} is $\ell$-aligned $(gs,hs,\alpha)$ where
$\inputs(hs)\prefixeq ins$,
$\alpha$ covers $hs$,
and there are 
$i,j,\Phi$ such that
$i<\len{gs}$,
$j=\len{hs}-1$, 
$i \alpha j$, 
$\redex(gs_i)$  
is $\code{assume}^{\ell'} \Phi$ with $\ell'\lleq\ell$,  
Eqn.~(\ref{eq:conform}) does not hold, and  
$(gs\take i,\, hs\take j,\, \beta)$ is a conformance for $\ell,ins$ 
that covers $gs\take i$,
where $\beta = \{(k,l) \mid k\alpha l \land k < i \land l < j \}$.

An \dt{assertion failure} is the same as an assumption fiat, except that 
$\redex(gs_i)$ is $\code{assert}^{\ell'} \Phi$ with $\ell'\lleq\ell$.  

An \dt{alignment failure} is $\ell$-aligned $(gs,hs,\alpha)$ where 
$\inputs(hs)\prefixeq ins$,
$\alpha$ covers $hs$,
and there are $i,j$ such that $i<\len{gs}$, $j=\len{hs}-1$,
$\redex(gs_i)$ is an $\ell$-annotation,  $\redex(hs_j)$ is an $\ell$-annotation different from $\redex(gs_i)$,
and $(gs\take i,hs\take j,\alpha)$ is a conformance for $\ell,ins$ that covers $gs\take i$.

A \dt{divergence fiat} is $\ell$-aligned $(gs,hs,\alpha)$ where
$\inputs(hs)\prefixeq ins$ and there are $i,j,k$ with $i < k < \len{gs}$ and
$j=\len{hs}-1$ such that $(gs\take (i+1), hs, \alpha)$ is a conformance (for
$\ell,ins$), $\redex(gs_k)$ is an $\ell$-annotation, and no $\ell$-annotation
occurs as redex in $gs$ between $i$ and $k$.  Moreover $i \alpha j$ and either
$i=0=j$ or $\redex(gs_i)$ is an $\ell$-annotation or $\redex(gs_i)$ and
$\redex(hs_j)$ are $\ell$-outputs.\footnote{That is, outputs at some level
$\ell'\lleq\ell$.  A nontrivial semantic fact is that under these conditions
they will in fact be the same output owing to boilerplate assertions.}  Finally, there is an infinite sequence
$H$ of configuration lists such that for all $n\in\N$, $\len{H_n} = n$, $hs\cat
H_n$ is a pre-run with $\inputs(hs\cat H_n)\prefixeq ins$, and $H_n$
contains no $\ell$-annotation or $\ell$-output.
\end{tdisplay} 

For example, suppose the input history of the major pre-run $gs$ is $[\Ein^\ell 1,\Ein^\ell 2]$.
At level $\ell$, with $ins=[\Ein^\ell 3,\Ein^\ell 2]$, we get an assumption fiat regardless of the program.
That is because a well specified $\ell$-handler begins with an assumption that the inputs agree, so any minor pre-run aligns properly and the initial agreement is false.

A subtlety in the definition of $(\ell,ins)$-conformance has to do with the intention to account for all steps of $gs$.
If $ins$ provides enough inputs, this can be achieved, in which case $\alpha$ covers $gs$.
If $ins$ fails to provide enough inputs for the minor pre-run $hs$ to fully align with $gs$,
nonetheless $hs$ should be as long as possible, i.e., reach a receptive configuration.
The case where $gs$ is not covered is not a satisfactory account of the security of $gs$,
but this is not a problem because the safety condition (below) quantifies over all input lists $ins$.

Divergence fiat needs to be understood in connection with programs being well specified and input total.
Because $\ell$-visible inputs are accompanied by annotations for $\ell$, 
a divergence fiat indicates that the minor pre-run is continuing without 
producing visible output or consuming visible input.  
That could be due to a nonterminating loop with no outputs, 
or one with non-visible outputs.
By contrast, in an alignment failure, the minor pre-run definitely reaches another $\ell$-annotation,
but one that does not match $gs$.

\begin{tdisplay}{Safe pre-run}
A pre-run $gs$ is \dt{safe for level $\ell$ and inputs $ins$} iff
there are $hs,\alpha$ such that 
$(gs,hs,\alpha)$ is a conformance, an assumption fiat, or a divergence fiat for $\ell,ins$.
A pre-run is \dt{safe for $\ell$} if for every $ins$ it is safe for $\ell,ins$.
\end{tdisplay}

If $gs$ is safe, every alternative input list is either not a leak (owing to boilerplate asserts and conformance) 
or can be disregarded due to either (a) divergence, (b) inconsistency with visible inputs (owing to boilerplate assumptions), 
or (c) intentional release (owing to non-boilerplate assumptions).

Extending a failed or fiatted pre-run yields the same.

\begin{lemma}\label{lem:extendFail}  
If $gs\cat g$ and $gs$ are pre-runs and 
$(gs,hs,\alpha)$ is an assumption or divergence fiat for $\ell,ins$, or an assertion or alignment failure for $\ell,ins$, then so is $(gs\cat g,hs,\alpha)$.
\end{lemma}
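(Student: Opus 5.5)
The plan is to unfold each of the four definitions and check that every clause still holds when $gs$ is replaced by $gs\cat g$, keeping $hs$ and $\alpha$ unchanged. The key observation is that in each case the witnessing indices satisfy $i<\len{gs}$ (and $k<\len{gs}$ for divergence fiat). Every condition refers to $gs$ only through configurations $gs_m$ with $m\le i$ or $m\le k$, or through prefixes $gs\take i$ and $gs\take(i+1)$. These are literally unchanged in $gs\cat g$, since $(gs\cat g)_m = gs_m$ for $m<\len{gs}$ and $(gs\cat g)\take m = gs\take m$ for $m\le\len{gs}$. The conditions that mention only $hs$, such as $\inputs(hs)\prefixeq ins$, $\alpha$ covering $hs$, $j=\len{hs}-1$, and the infinite sequence $H$, are untouched.

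First I will check that $(gs\cat g,hs,\alpha)$ is still $\ell$-aligned. We have $\dom(\alpha)\subseteq\dom(gs)\subseteq\dom(gs\cat g)$. Monotonicity and prefix-closure of domain and range are properties of $\alpha$ alone. The proper-alignment condition only constrains pairs $i\alpha j$, and for these $(gs\cat g)_i = gs_i$. Hence $\alpha$ is a proper alignment from $gs\cat g$ to $hs$.

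Next I go through the cases.

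\emph{Assumption fiat and assertion failure.} The witness $i<\len{gs}<\len{gs\cat g}$ and $j$ are reused. The redex of $(gs\cat g)_i$ is the same annotation. The failure of (\ref{eq:conform}) is the same statement. The conformance $(gs\take i, hs\take j,\beta)$ is identical because $(gs\cat g)\take i = gs\take i$ and $\beta$ depends only on $\alpha,i,j$.

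\emph{Alignment failure.} The argument is the same, using $(gs\cat g)\take i = gs\take i$.

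\emph{Divergence fiat.} We keep $i<k<\len{gs}$. Then $(gs\cat g)\take(i+1) = gs\take(i+1)$, so the conformance clause is unchanged. The redex of $(gs\cat g)_k$ is still an $\ell$-annotation. The condition that no $\ell$-annotation occurs strictly between $i$ and $k$ refers only to indices below $\len{gs}$. The clause $i\alpha j$ and its disjunction, as well as $H$, do not change.

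The only step needing any care, and which I expect to be the main (mild) obstacle, is confirming that no definition secretly refers to $\len{gs}$ or $\last(gs)$. The one place where full coverage of $gs$ appears is inside the nested conformance, and that is applied to a truncated prefix, so it is unaffected. The hypothesis that $gs\cat g$ is a pre-run is needed only so that the triple is a pair of pre-runs at all. With that verified, the lemma follows by this case-by-case reuse of witnesses.
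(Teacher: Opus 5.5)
Your proposal is correct. It is the direct unfolding of definitions that the paper leaves implicit: the paper states this lemma without a written proof and relies on the Rocq development. Your key points are the right ones: every witness satisfies $i<\len{gs}$ (and $k<\len{gs}$ for divergence fiat), every clause touches $gs$ only through configurations and prefixes that coincide in $gs\cat g$, the alignment conditions transfer because $\dom(\alpha)\subseteq\dom(gs)$, and the only coverage-of-the-major-run requirement sits inside a nested conformance on a truncated prefix.
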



\begin{lemma}\label{lem:contractSafe} 
If $gs\cat g$ and $gs$ are pre-runs and $gs\cat g$ is safe for $\ell$ then so is $gs$.
\end{lemma}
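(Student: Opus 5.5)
Fix an input list $ins$. Since $gs\cat g$ is safe for $\ell$, there are $hs,\alpha$ such that $(gs\cat g,hs,\alpha)$ is a conformance, an assumption fiat, or a divergence fiat for $\ell,ins$. The plan is to build from this a witness $(gs,hs',\alpha')$ of one of the same three kinds. The construction is to restrict $\alpha$ to $\dom(gs)\times\dom(hs)$ and, where needed, truncate $hs$ to the range of the restriction. Proper alignment, monotonicity and prefix-closure are preserved under such restriction, so only the classification clauses need checking. The argument goes by cases on the kind of witness and on whether the last index $n=\len{gs}$ of $gs\cat g$ plays a role in it.

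\textbf{Assumption fiat.} Let $i,j$ be its indices. If $i<n$, the same data restricted to $gs$ is again an assumption fiat, because the clauses only mention $gs\take i$, $hs$ and $j$. If $i=n$, the embedded $(gs\take n,hs\take j,\beta)=(gs,hs\take j,\beta)$ is a conformance covering $gs$, and this is the desired witness.

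\textbf{Divergence fiat.} Let $i<k$ be its indices. If $k<n$, restriction gives a divergence fiat for $gs$. If $k=n$, the clause $(gs\take(i+1),hs,\alpha)$ is a conformance, but it does not cover $gs$. I would then look for a different witness. Since no $\ell$-annotation occurs strictly between $i$ and $n$, any minor extension only has to match tick/non-annotation steps. I expect this case to be handled by keeping the same $hs$ and the infinite sequence $H$. This would show $(gs,hs,\alpha)$ is still a divergence fiat, provided some later index of $gs$ carries an annotation. If no such index exists, I would instead argue that $(gs,hs,\alpha')$ is a conformance whose $\alpha'$ pads the unmatched annotation-free steps of $gs$ onto $j$.

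\textbf{Conformance.} If $\alpha$ covers $hs$ but not $gs\cat g$, then $\inputs(hs)=ins$ and $\last(hs)$ is receptive. The restriction to $gs$ may leave parts of $hs$ uncovered. In that case I take $hs'$ to be the prefix of $hs$ equal to the range of the restricted alignment. Coverage of $gs$ then follows from prefix-closure of $\dom(\alpha)$ when $\alpha$ covered $gs\cat g$, and the agreement condition (\ref{eq:conform}) is inherited pointwise.

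\textbf{Main obstacle.} The hard case is conformance with $\alpha$ covering $gs\cat g$, where $hs'$ must be shortened. The truncated $hs'$ need not end receptive, and the ``moreover'' clause still holds only because $\alpha'$ covers $gs$. By Remark~\ref{rem:one}, an annotation at $n-1$ is matched to a single index, which keeps the truncation monotone, so this case should close. The genuinely delicate point is instead the divergence fiat with $k=n$. There, annotations in $hs$ past $j$ must be excluded, and one must re-establish that $i$ is an annotation, output, or initial index. I would try to reuse the given $i,j$ unchanged and check that the restriction of $H$ still witnesses the fiat. If that fails, I would fall back on Lemma~\ref{lem:traceFromInput} to construct a fresh minor pre-run for $ins$.
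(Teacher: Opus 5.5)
\emph{There is a genuine gap: the divergence-fiat case with $k=n$ is left unresolved, and neither of your proposed branches works there.}

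Most of the proposal is sound: the assumption fiat (both $i<n$ and $i=n$), the divergence fiat with $k<n$, and conformance. After restricting $\alpha$, coverage of $hs$ follows from monotonicity, because $j=\len{hs}-1$ is related to some index below $n$. For the same reason the range of the restriction is prefix-closed. Your conformance paragraph conflates two subcases. If $\alpha$ does not cover $gs\cat g$, then $n\notin\dom(\alpha)$ and $(gs,hs,\alpha)$ is already a conformance.

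The problem is the divergence fiat with $k=n$, which you flag but do not resolve.
\begin{itemize}
\item Your first branch never applies. $k$ is the first $\ell$-annotation after $i$, so when $k=n$ no index of $gs$ after $i$ carries one.
\item The padded conformance fails in the principal case, where $\redex(gs_i)$ is an $\ell$-annotation. Since $\alpha$ is proper and $i\alpha j$, $\redex(hs_j)=\redex(gs_i)$ is an $\ell$-annotation. So every new pair $(m,j)$ with $i<m<n$ relates a non-annotation redex to an annotation redex, which violates proper alignment.
\item The ``moreover'' clause cannot rescue it, because $\last(hs)=hs_j$ is active.
\end{itemize}
Hence no conformance for $gs$ exists with this $hs$ at all. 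The fallbacks you list (restricting $H$, invoking Lemma~\ref{lem:traceFromInput}) are not yet arguments. Padding onto $j$ is fine in the other two subcases ($i=0=j$, or $\ell$-outputs), and if $i=n-1$ the restriction alone suffices.

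The missing idea is to use the divergence witness $H$ to lengthen the minor pre-run. For $i<n-1$, take $hs'=hs\cat H_1$. This is a pre-run with $\inputs(hs')\prefixeq ins$, and its added configuration has no $\ell$-annotation as redex. Take $\alpha'=\{(m,l)\in\alpha\mid m\leq i\}\union\{(m,j+1)\mid i<m<n\}$. Then:
\begin{itemize}
\item the first part covers $hs$, because $(gs\take(i+1),hs,\alpha)$ is a conformance;
\item the new pairs preserve monotonicity, and they are proper since neither side is an $\ell$-annotation;
\item the agreement condition is required only at indices $\le i$, where it is inherited;
\item $\alpha'$ covers both $gs$ and $hs'$.
\end{itemize}
So $(gs,hs',\alpha')$ is a conformance.

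Alternatively, you can bypass the case analysis entirely by contraposition. If $gs$ were not safe for $\ell,ins$, Theorem~\ref{thm:exhaustive} would give an assertion or alignment failure $(gs,hs,\alpha)$. Lemma~\ref{lem:extendFail} would make $(gs\cat g,hs,\alpha)$ such a failure too. The exclusivity part of Theorem~\ref{thm:exhaustive} would then contradict safety of $gs\cat g$ for $\ell,ins$.
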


The details in the definitions of failures, fiats, and conformance are motivated by the need to 
make these conditions mutually exclusive (unlike in \cite{Chudnov14})
and exhaustive, as confirmed by the following classification result.

\begin{restatable}{theorem}{lemexhaustive} 
\label{thm:exhaustive} 
For any $gs$, $ins$, and $\ell$  there are $hs$ and $\alpha$ such that 
$(gs,hs,\alpha)$ is either a conformance,
an assertion or alignment failure,
or an assumption or divergence fiat, for $\ell,ins$.
Furthermore, a given $gs,ins,\ell$ fits in only one category,
although there may be more than one $hs,\alpha$ that witnesses its membership in the category.
\end{restatable}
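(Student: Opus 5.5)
The plan has two parts, existence and uniqueness of category, and the main tool for both is determinacy. By Lemma~\ref{lem:determ} and Lemma~\ref{lem:traceFromInput}, $ins$ fixes a unique, possibly infinite, execution of the minor program. So the minor pre-run $hs$ is essentially determined by $ins$, up to how far we truncate it, and the only freedom lies in $\alpha$ and in that truncation point.

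\textbf{Existence.} I will build $hs$ and $\alpha$ by a lockstep construction that walks through $gs$ one $\ell$-annotation at a time. Let the indices of $\ell$-annotation redexes in $gs$ be $a_1<a_2<\cdots<a_m$. The invariant, indexed by $n$, is that we hold $(gs\take(a_n+1),hs_n,\alpha_n)$, a conformance that covers $gs\take(a_n+1)$ and whose last aligned pair is the $n$th annotation on both sides. To advance from $n$, run the minor execution from $\last(hs_n)$ until its next $\ell$-annotation, and align the intervening non-annotation steps monotonically, for instance by padding the shorter segment. This can end in five ways:
\begin{enumerate}
\item The minor run reaches an annotation equal to $\redex(gs_{a_{n+1}})$. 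If (\ref{eq:conform}) holds we extend the invariant; if it fails we obtain an assumption fiat or an assertion failure, according to the kind of annotation.
\item It reaches a different annotation, which gives an alignment failure.
\item It exhausts $ins$ while in a receptive configuration, which gives a conformance not covering $gs$.
\item It diverges with no further $\ell$-annotation or $\ell$-output. This is a divergence fiat, with $H$ taken from Lemma~\ref{lem:traceFromInput}(c).
\item $gs$ has no further annotation. We extend until $gs$ is covered, or until a receptive end is reached or divergence occurs.
\end{enumerate}
For the base case I take $i=0=j$. I will do induction on the number of annotations remaining in $gs$, with a case split on the behaviour of the unique minor execution, which is decidable classically. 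A subtle point is the divergence-fiat clause that $\redex(gs_i)$ and $\redex(hs_j)$ may be $\ell$-outputs. This needs the boilerplate assertions: an $\ell$-output in either run is immediately preceded by an $\ell$-assertion, so outputs appear only right after matched annotations, and conformance then forces the output values to agree.

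\textbf{Uniqueness.} Suppose two witnesses $(hs,\alpha)$ and $(hs',\alpha')$ fall in different categories. Because $\inputs(hs),\inputs(hs')\prefixeq ins$ and execution is deterministic, one of $hs,hs'$ is a prefix of the other. By Remark~\ref{rem:one} and the remark following it, proper alignment forces the $n$th $\ell$-annotation of $gs$ to be aligned with the $n$th $\ell$-annotation of the minor run, independently of $\alpha$. Each category is then characterized by the first index $n$ at which the lockstep construction stops and by the reason it stops: a failing formula on an assume, a failing formula on an assert, mismatched redexes, exhaustion of $ins$, or divergence. These reasons are pairwise incompatible for the same deterministic minor run, since for example a run that diverges without annotations cannot also reach a mismatched annotation, and a run that exhausts $ins$ at a receptive configuration cannot also diverge. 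So I will prove a lemma stating that any witness of any category determines the same stopping index and stopping reason, and mutual exclusion follows from it.

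\textbf{Main obstacle.} I expect the hardest step to be the divergence fiat in both directions. Its definition aligns $gs\take(i+1)$ with the whole of $hs$ and requires $\redex(gs_k)$ to be the next annotation, so I must show that divergence of the minor run excludes a conformance covering $gs$. The reason is that $gs_k$ needs a matching annotation that the minor run never reaches. I must also handle the case where $ins$ runs out exactly when the minor run is at a receptive configuration, which is the boundary between conformance and divergence fiat. My first approach to this is to match the infinite sequence $H$ against the finite-prefix characterization in Lemma~\ref{lem:traceFromInput}, which supplies the uniqueness needed.
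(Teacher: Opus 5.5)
Your proposal follows the paper's proof. For existence you both track the unique minor run fixed by $ins$: whenever the major run reaches an $\ell$-annotation, you advance the minor run to its next $\ell$-annotation and classify the outcome (match with formula true or false, mismatch, exhaustion of $ins$, or divergence). For exclusivity you rely on determinacy and the matching of $n$th $\ell$-annotations, which is the ingredient the paper cites for its Rocq proof. Inducting over annotation segments rather than single steps of $gs$ only changes the bookkeeping: your invariant does the job of the paper's strengthened invariant that $\last(hs)$ is an $\ell$-annotation only if $\last(gs)$ is.

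One detail in your case~4 needs to be made explicit. If the $n$th matched annotation is a boilerplate $\codef{assert}^{\ell'}$, the minor run necessarily performs the following $\ell$-output before it diverges. The divergence-fiat witness must therefore place $i,j$ at the two (identical) output steps rather than at the annotation, and that is exactly what the output clause of the definition is for.
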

The proof goes by induction on $gs$ and constructs the requisite $hs$ and $\alpha$.
For the sake of induction hypothesis, a stronger property is proved for conformance,
namely that if $\last(hs)$ is an $\ell$-annotation then so is $\last(gs)$.  

The construction of $hs$ and $\alpha$ amounts to the design of an ideal monitor.
Simultaneously for the minor pre-runs of all $ins$, 
the monitor tracks the steps of the major run and reasons about the minor run for $ins$.
If any $ins$ results in an assertion or alignment failure, then continuing execution of $gs$ is unsafe.

\subsection{Relational safety implies epistemic security} \label{sec:implic}
 
Security says that for each observer level $\ell$, what they may learn is within what is allowed by the release policy as specified by (non-boilerplate) $\ell$-assumptions.
Our main result is that safety implies security.

\begin{restatable}{theorem}{thmsafesecure}\label{thm:safesecure} 
For all $\ell$, any pre-run that is safe for $\ell$ is secure for $\ell$.
\end{restatable}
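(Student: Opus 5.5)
By Lemma~\ref{lem:contractSafe}, every prefix of a safe pre-run is safe. So it suffices to fix a safe pre-run $hs\cat g$ with $hs\events ts$ and $hs\cat g\events ts\cat t$, and prove (\ref{eq:secure}) for this last step. As remarked after the definition of security, the case $t\notin\Ve^\ell$ is immediate from Lemmas~\ref{lem:knowPknow}(a) and~\ref{lem:knowlSame}(a). Hence assume $t\in\Ve^\ell$, and take $ins\in\pknow^\ell(ts)\intersect\RP^\ell(hs\cat g)$. We must exhibit a pre-run $ks$ with trace $ws$ such that $\inputs(ws)=ins$ and $\vis^\ell(ts\cat t)\prefixeq\vis^\ell(ws)$.

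\textbf{Step 1: fix the witnesses.} Membership in $\RP^\ell(hs\cat g)$ gives a semi-conformance $(hs\cat g,ks_0,\alpha_0)$ with $\inputs(ks_0)=ins$. Membership in $\pknow^\ell(ts)$ gives a pre-run whose trace $us\cat u\cat vs$ has input list $ins$, with $u$ visible and $\vis^\ell(ts)\prefixeq\vis^\ell(us)$. By determinacy (Lemmas~\ref{lem:determ} and~\ref{lem:traceFromInput}), these pre-runs and the unique maximal pre-run $ks$ for $ins$ are all prefixes of one execution. In particular, the execution on $ins$ does not diverge silently before producing the visible event $u$.

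\textbf{Step 2: apply safety.} Apply safety of $hs\cat g$ for $\ell,ins$. This gives a triple $(hs\cat g,hs',\beta)$ that is a conformance, an assumption fiat, or a divergence fiat; Theorem~\ref{thm:exhaustive} guarantees that failures are excluded. Each case is handled as follows.
\begin{itemize}
\item \emph{Assumption fiat.} Here $hs'$ is a prefix of the $ins$-execution that reaches an $\ell$-assumption whose formula fails against $hs\cat g$. The semi-conformance $\alpha_0$ also aligns $\ell$-assumption occurrences. I would prove an auxiliary lemma: the $n$th $\ell$-assumptions of the two runs are at the same aligned indices under $\alpha_0$ and under $\beta$, so the formula holds at that pair. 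This contradicts the fiat.
\item \emph{Divergence fiat.} The minor run loops forever with no $\ell$-annotation or $\ell$-output after a matched point. Any later $\ell$-visible event requires an $\ell$-annotation: a boilerplate assertion before an output, or a boilerplate assumption at the start of a handler. The silent loop therefore contradicts the existence of $u$ from Step~1. I expect to need the prefix relation between the divergent $hs'\cat H_n$ and the $\pknow$ pre-run, again via determinacy.
\item \emph{Conformance.} The boilerplate assertions $\Agr e$ before each $\ell$-output give equal output values at aligned steps. The boilerplate assumptions give equal $\ell$-visible inputs. Together these yield $\vis^\ell(ts\cat t)\prefixeq\vis^\ell$ of the trace of $hs'$ extended to a pre-run consuming $ins$. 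If $\beta$ fails to cover $hs\cat g$, then $\inputs(hs')=ins$ and $\last(hs')$ is receptive. This contradicts the need to match the final visible event, since $ins$ has the same visible inputs as $\inputs(hs\cat g)$ via $\RP$.
\end{itemize}

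\textbf{Step 3: main obstacle.} The hard part is relating the semi-conformance alignment $\alpha_0$, which only constrains assumptions, to the proper alignment $\beta$ supplied by safety. This is needed both to refute the assumption-fiat case and to reconstruct the visible trace correspondence. I would prove, by induction on the length of the major pre-run, that two alignments to prefixes of the same deterministic minor execution agree on the positions of matched $\ell$-assumptions. I would also prove that visible events in an aligned pair occur in the same order, using Remark~\ref{rem:one} to locate annotation steps. The rest is bookkeeping with Lemma~\ref{lem:traceDetEv}.
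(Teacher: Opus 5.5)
Your outline follows the paper's proof. You reduce to the last step and dispose of invisible $t$. You then split on the safety witness: refute $\RP^\ell$ for an assumption fiat, refute $\pknow^\ell$ for a divergence fiat, and prove the goal from a conformance. Your rank-matching of $\ell$-assumptions under $\alpha_0$ and $\beta$ is a more explicit version of the paper's assumption-fiat step. (Theorem~\ref{thm:exhaustive} is not needed in Step~2, since the definition of safety already gives one of the three safe categories.)

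The step that fails is your claim that a conformance not covering $hs\cat g$ is contradictory. This is false when $t$ is an $\ell$-output. Take the one-level program whose only handler is $\codef{input}^L x.\ \codef{assume}^L\,\Agr x;\ \codef{assert}^L\,\Agr 1;\ \codef{output}^L\,1$. Let $hs\cat g$ be its run on $\Ein^L 5$ ending just after the output, so $t=\Eout^L 1$. Let $ins=[\Ein^L 5]$, and let $hs'$ be the same run continued to its final receptive configuration. Let $\beta$ be the identity up to the skip following the assertion, and send every later index of $hs'$ to that single index of $hs\cat g$. This $\beta$ is a proper alignment covering $hs'$ but not $hs\cat g$, and the annotation formulas hold. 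Moreover $\inputs(hs')=ins$ and $\last(hs')$ is receptive, so this is a conformance, yet $ins\in\pknow^L(ts)\intersect\RP^L(hs\cat g)$. Safety only promises \emph{some} witness, so you must handle this one. Equality of visible inputs, which is all $\RP^\ell$ provides, rules out non-coverage only when $t$ is an input: then the boilerplate assumption at $g$ is forced into $\dom(\beta)$. It says nothing about whether the matching output was aligned.

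The missing ingredient is the progress hypothesis, which the paper uses exactly here. Since $\inputs(hs')=ins$ and $\last(hs')$ is receptive, $hs'$ is the maximal execution on $ins$. By determinacy the $\pknow^\ell$ witness (trace $us\cat u\cat vs$) is a prefix of $hs'$, so the trace of $hs'$ has a visible event beyond the first $m=\len{\vis^\ell(ts)}$. Because $\beta$ matches $\ell$-annotations rank by rank, the $(m{+}1)$th boilerplate annotation of $hs'$ has the same redex as that of $hs\cat g$, namely the assertion $\Agr e$ preceding $t$. That assertion forces the next visible event of $hs'$ to be $t$, which is the paper's ``$u$ must be $t$''.

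You need the same ingredient in your covering subcase. A covering $\beta$ may send the output step of $hs\cat g$ to the skip just after the matching assertion in $hs'$, so $hs'$ need not contain $t$. You must therefore extend $hs'$ at least as far as the progress witness, not merely until it has consumed $ins$.
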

A detailed proof is sketched in Appendix~\ref{sec:proofs}, including 
the assumption fiat case which was wrong in \CN.
The crux is to consider the three safe possibilities for the last step in 
pre-run $hs\cat g$ in Eqn.~(\ref{eq:secure}).
We must show, for any $ins$, that 
$ins\in \pknow^\ell(ts)$ and $ins\in \RP^\ell(hs\cat g)$ imply $ins \in\know^\ell(ts\cat t)$. 
Conformance lets us establish the consequent.
Assumption fiat contradicts $ins\in \RP^\ell(gs\cat g)$.
Divergence fiat contradicts $ins\in \pknow^\ell(ts)$.

A condition similar to safety was shown to be monitorable in~\cite{Chudnov14}. 
The idea is that the monitor abstracts from a collecting semantics~\cite{RivalYiBook,AssafN16}
that tracks the classification of every $ins$ (and at every level) with respect to the current pre-run.
For static verification, safety can be proved by induction on the major pre-run, considering an arbitrary $ins$ with its corresponding pre-run, along the lines of the proof of Theorem~\ref{thm:exhaustive}---the key point being to show that from a conformance the next step yields either conformance, assumption fiat, or divergence.

\section{Related work}\label{sec:related}

The reader can consult \CN\ for relevant work as of 2018 so for brevity we just highlight a few more recent works.

Works that enable specification of state dependent downgrading policies include~\cite{BNR08b,BrobergS10,VanhoefGDPR14,McCallZJ18,McCallB023} and typically feature epistemic security definitions.
State can be used to designate `where' in the code declassification occurs as well as `what' is declassified (using terminology from~\cite{SabelfeldSandsDeclassJ}).
Instead of an epistemic semantics, Menz et al~\cite{MenzHLG23} use logical relations for semantics of `where' declassification in higher order programs.
Inspired by epistemic logics, Soloviev et al~\cite{SolovievBG24} use modal operators---both for knowledge and for agents' ability to read or write variables---to give elegant specifications of security properties including robust declassification.
Robust declassification means that whatever is declassified cannot be influenced by the adversary~\cite{ZdancewicMyers01}.

Bay and Askarov~\cite{BayA20} use an epistemic formulation to define the declassification of progress.
Li and Zhang~\cite{LiZhang22} use an epistemic style semantics for a system of dynamic policies that can express erasure which in some sense decreases knowledge. 
Cecchetti~\cite{Cecchetti25} uses infinite runs in the style of hyperproperties~\cite{ClarksonSchneiderHyper10} to define progress-sensitive and progress-insensitive variations of robust declassification and the related properties like transparent endorsement~\cite{CecchettiMA17}; the theory is machine checked in Rocq.

\CN\ aimed for extensional interpretation of policy not overly tied with the program, despite the policy being expressed by assumptions in the code.  Murray et al~\cite{murray2023assume} take the same approach and have a security condition like
Eqn.~(\ref{eq:secure}) with an explicit release policy derived from assume commands.
They make the connection between assumptions in code and external observation via ghost state that tracks application-specific notional event traces, an approach first suggested in~\cite{BNR08b} and compatible with the present work.
But the technical development involves instrumented semantics: events track occurrences of assumptions as well as control branches, memory accesses, and thread scheduling, as needed for the strong constant-time security property.
Their theory is machine checked in Isabelle/HOL; as noted in Section~\ref{sec:intro}
the strong property greatly simplifies the alignment of minor and major runs.

McCall et al~\cite{McCallB023} use an epistemic security condition for monitoring of event-driven web programs where page elements and handlers can be added dynamically. Their system features state dependent declassification akin to what can be expressed in our system and in~\cite{murray2023assume} although policies are not given a meaning independent of the security condition.  Like~\cite{murray2023assume}, the security condition refers to instrumented semantics.  A key contribution is to ensure robustness using lightweight taint tracking; robustness is proved as a corollary of the epistemic confidentiality condition.


Although our semantics does not generate instrumentation events, annotations do take steps and our release policy makes reference to those via alignments.  
The flawed definition of $\RP^\ell$ in \CN\ made it a predicate on observed event traces whereas our fix makes it a predicate on pre-runs---though still designating possible input lists.  Murray et al~\cite{murray2023assume} make the connection between assertions in code and external observation via ghost state that tracks application-specific notional event traces, an approach first suggested in~\cite{BNR08b} and compatible with the present work.
It seems difficult to formulate semantics for downgrading in a way that is fully extensional with respect to an un-instrumented semantics.



\section{Remarks on the use of AI}\label{sec:remarks}

In this section I switch to first person and report on my experience
developing the Rocq formalization using Claude Code~\cite{claudecode2026}
(henceforth \dt{\Claude}).
Two recent preprints~\cite{paraskevopoulou2026,Shengyi26} report on experiments using LLM-based coding assistants for substantial developments in Rocq, extending sizeable existing developments.  
Unlike those authors, I have not systematically analyzed the logs of my interactions.
My main goal was to machine check the paper.   

I am an experienced user of Rocq and had previously used \Claude\ for a small inconclusive exploration in an existing Rocq development~\cite{NBN25subm}.  
I ran \Claude\ on an isolated virtual machine and gave \Claude\  access to bash shell commands (used for reading and editing files), the \LaTeX\ file, the Rocq files it was developing, and the Rocq MCP server\footnote{\url{https://github.com/LLM4Rocq/rocq-mcp}} which enables \Claude\ to inspect intermediate proof goals during processing of a proof script.  

I provided an initial Rocq file with only import directives for some libraries including classical extensionality and excluded-middle axioms.  I gave the following initial prompt. 
``I want your help to create Rocq code to formalize the math in the Latex file paper-8June2026.tex.  We will do this step by step.  The math starts in the section labelled sec:programs.  Think hard and create a detailed, step-by-step plan to formalize what's in that section up to the Programs subsection''.
I also instructed it not to alter definitions or statements of results without consulting me.
I proceeded section by section over about three weeks, averaging an hour a day, pausing the session and resuming from compacted chat summaries and saved 'memories'.  Twice I ended the session and started a new one, when \Claude\ seemed bogged down, distracted by its prior English proof sketches while not remembering an overall plan.
Although I saved snapshots in a repository, \Claude\ only had access to the current version of the file. 

I was impressed that \Claude\ could develop nontrivial supporting infrastructure and do routine proofs on its own.  For example, my proof sketches are imprecise about running the program for a few steps until something is reached, e.g., an assumption, or the program diverges.  It developed an eval-with-fuel function and lemmas about that.
\Claude\ pointed out some minor improvements while missing other obvious ones. For
the original version of  Lemma~\ref{lem:traceFromInput} it proposed a plausible but wrong formalization, which did bring my attention to a strengthing already suggested in the original expository text.

\Claude\ intermittently presented me with design decisions because it was stuck or convinced a claim was false.  I reviewed progress and occasionally corrected clear mistakes, e.g.,
its initial definitions for the classification of aligned pairs used semi-proper $\ell$-alignment where it should be proper.  While trying to prove a result akin to Lemma~\ref{lem:traceFromInput}, 
to prove Theorem~\ref{thm:exhaustive}, 
it became clear that the definition of divergence failure in \CN\ was flawed: it needed to allow additional inputs to be consumed before reaching the input whose handler diverges.  This is how it was actually used in my proof sketches, but the paper's formal definition had a constraint that every config is active.  Oddly, \Claude\ failed to spot that the best fix is simply drop that constraint.

Mutual exclusivity of the categories was vague in the original statement of Theorem~\ref{thm:exhaustive}.  \Claude\ had trouble proving exclusivity of divergence fiat from alignment fault and presented me with an unconvincing counterexample.  So I sketched a proof which helped sharpen the needed ingredients, e.g., the $n$-the visible annotation in pre-run.

About Lemma~\ref{lem:knowlSame}(b), it found a counterexample.
A proof had not been sketched in \CN\ and it's not used for main results, just a conceptual check that knowledge is defined sensibly.  Similarly remarks apply to Lemma~\ref{lem:knowPknow}(b).
\Claude's proposed fixes included to require the program to be safe (noninterferent, it said, showing that it knows the literature), which is ridiculous.  
I gave it a proof sketch in which it found a flaw.  This interaction led me 
to devise the current definitions of $\know^\ell$ and $\pknow^\ell$, using prefix 
where \CN\ used equality. (Prefix is also explicit in the security condition of~\cite{McCallB023}.)

\Claude\ was prone to overly specific and intricate lemmas, where humans would step back and think about abstraction and elegance.  For auxiliary definitions it was prone to using decidability hypotheses which is gratuitous because my initial Rocq imports included classical axioms.
Here is an example of missing an obvious generalization.  
With the revised definition of knowledge, a lemma in \CN\ about visible input no longer holds.  It was never needed, just conceptual, and tied with the fact that definition of security excluded visible inputs.
(With informal rationale that the observer of course learns what input they choose.) 
When revising the proof sketches after \Claude\ had completely proved the main theorem,
I realized there's no longer a need to exclude visible input in the definition of security.  In a fresh \Claude\ session I asked it to drop the hypotheses $t\notin\VIn^\ell$ from definition of \verb+step_secure+ and plan a revised proof.  It appeared to go off in the weeds so I had to point out how the hypothesis was also unnecessary in subsidiary lemmas it had introduced for the proof of Theorem~\ref{thm:safesecure}.
Similarly, Lemmas~\ref{lem:knowlSame}(b) and~\ref{lem:knowPknow}(b) had also been unnecessarily restricted to outputs.

\Claude\ was keen to summarize progress and achieve progress in terms of number of Qed results.
It was prone to cheating by strengthening assumptions in its lemmas (an oft-reported issue~\cite{paraskevopoulou2026}) and by factoring out the hard part of a proof into a separate lemma.
The sycophantic and over-confident tone of chatbots has been widely remarked and applies to \Claude.  Occasionally it drifted from human-like reasonable commentary to skating on thin ice.  In a summary of progress it extruded this text: 
``So the substantive mechanization is complete: the security theorem \verb+safe_implies_secure+, the classification  (\verb+classification_exists+), the whole annotation-rank cluster, the conformance-extension lemmas, and all the exclusivity lemmas are proven---resting only on the determinacy-axiom layer the development always intended to keep abstract.''
To which I responded: ``I'm not sure where you got the idea that `the development always intended to keep abstract'.  Developments don't have intentions.  As a user I do have intentions, and in particular my intention is to fully machine check everything.''

The proof scripts make pedestrian but intricate use of basic tactics to construct proofs. Though scripts are notoriously unreadable, humans have tactic usage styles to enhance readability for maintenance and adaptation.  But with the advent of agentic AI coding tools such considerations may shift (as noted by Paraskevopoulou~\cite{paraskevopoulou2026}).
Wang~\cite{Shengyi26} emphasizes the importance to use the coding tool to assist with cleanup, which I plan to do.
\emph{The current Rocq code is available temporarily at 
{\normalshape\url{https://www.cs.stevens.edu/~naumann/pub/temp_assumeKnow.tgz}}
and will be put somewhere permanent once it's cleaned up.}

Human curation is surely needed for comments in the code.  \Claude\ seems prone to documenting lemmas with wordy comments that are obscure due to references to some context where the lemma is used.
That is also a sign that the lemmas do not factorize the proofs well.
The proof of Theorem~\ref{thm:safesecure} is complete.  The development currently comprises 
9747 non-blank lines of code and 1057 lines of comments.  I suspect that, even without aggressive use of tactic automation, the development can be done in a third of that.

Use of \Claude\ probably saved me time, especially if I amortize the time spent installing and getting familiar with the tools and improving my prompting.  On the other hand, with a human collaborator, even an inexperienced student, our discussions might have sparked interesting new ideas.

\section{Conclusion}

We presented a machine checked knowledge based theory that accounts for declassification of secrets in accord with policy expressed by assume statements in code akin to \verb+declassif+ statements in prior works (e.g.,~\cite{LiuAGM17,AcayRGMS21}).
The theory gives an explicit meaning to policy that is disentangled from the definition of security.
The main result says that a condition called safety, 
amenable to monitoring and inductive proof, implies security.
The development is a corrected version of the paper 
\CN\ which also sketched the idea
of static verification using relational Hoare logic.
However, there is a considerable difference between 
our safety condition and the pre-post relational properties for which such logics have been developed~\cite{NaumannISOLA20}. One difference is that it explicitly asserts existence of a minor run whereas most relational Hoare logics are for an $\forall\forall$ property (though some handle  $\forall\exists$, see~\cite{NagasamudramBN25lmcs,Beutner24,DardinierM24}).
Another difference is that it connects not a pair of executions but an execution and an input trace.  Developing compositional proof rules for safety is left as an open problem.

This paper is dedicated to David Basin on the occasion of his retirement.
He recognized early on the importance of machine checked formalization for security. His pioneering works (e.g.,~\cite{SchmidtEtalCSF12}) have helped bring that vision to practical fruition~\cite{BasinFMNNSZZ25}.

\bibliographystyle{splncs04}
\bibliography{biblioCSF}

\newpage
\appendix

\section{Proofs}\label{sec:proofs}

The following proof sketches are very similar to the original \LaTeX\  document that was given to \Claude.  But they have been revised to be consistent with the machine checked definitions and results.

\lemdeterm*  
\begin{proof}
Direct from definitions.  
\end{proof}

\lemtraceDetEv*
\begin{proof}
By induction on $gs$ and cases on transition rules.
For steps other than input, the argument is direct from Lemma~\ref{lem:determ}.
For input, consider a step from $\rconfig{\sigma}$ to $\config{c}{\sigma'}$.  
By semantics, $c$ is the handler body. 
Because the program is well specified, $c$ begins with $\codef{assume}^\ell~\Agr x$, from which we have that the channel is $\ell$ and the input value was assigned to variable $x$.
So the event is $\Ein^\ell n$ where $n$ is $\sigma'(x)$.
\end{proof}

For subsequent proofs it is helpful to generalize pre-runs to execution sequences that start from arbitrary initial configuration.
Then we define $\frun(g,ins,n)$, by recursion on $n\in\N$, to be the consecutive sequence of configurations starting from $g$ (but excluding $g$) and consuming inputs $ins$ until $n$ steps have been taken or all of $ins$ have been handled and a receptive configuration is reached.

\lemtraceFromInput* 
\begin{proof}
If there is some $n$ with $\len{\frun(\rconfig{\initState},ins,n)}<n$ then we get (a) and (b) straightforwardly,
and have $ins$ consumed which falsifies the antecedent of (c).
If $\len{\frun(\rconfig{\initState},ins,n)}=n$ for all $n$ then we get (c) as follows.
There must be some $m,k$ such that $\frun(\rconfig{\initState},ins,m)$ has consumed $k$ inputs and reached
the receptive configuration that steps to the handler for $ins_k$ which never terminates.  
So $gs$ is $\rconfig{\initState}\cat\frun(\rconfig{\initState},ins,m)$ and the $G_i$ are defined as the suffixes of 
$\frun(\rconfig{\initState},ins,m+i)$.
Uniqueness can be shown using determinacy.
\end{proof}

Lemma~\ref{lem:traceFromInput} is formulated for conceptual clarity.
To prove prove Theorems~\ref{thm:exhaustive} and~\ref{thm:safesecure}
we actually rely on similar but slightly more intricate properties along these lines:
given a pre-run one can extend it in accord with an input list to reach the next visible annotation or output, unless execution diverges.

\lemknowlSame*
\begin{proof}
(a) is direct from the definition.
For (b), suppose $ins\in\know^\ell(ts\cat t)$, 
so there are $gs,us$ with $ins = \inputs(us)$ and   
$gs\events us$ and $\vis^\ell(ts \cat t) \prefixeq \vis^\ell(us)$.
We have $\vis^\ell(ts) \prefixeq \vis^\ell(ts\cat t)$ 
by definition of $\vis^\ell$. 
So $gs,us$ witness $ins\in\know^\ell(ts)$.
\end{proof}

\lemknowPknow*
\begin{proof}
(a) is direct from the definitions.
For (b), suppose $ins \in \know^\ell(ts\cat t)$.
Thus there are $gs,us$ with
$gs\events us$, $\vis^\ell(ts\cat t) \prefixeq \vis^\ell(us)$, and $ins = \inputs(us)$.
Now $\vis^\ell(ts\cat t) = \vis^\ell(ts)\cat t$ using  $t\in\Ve^\ell$.
So there are $vs,vs'$ with $us= vs\cat t\cat vs'$ and 
$\vis^\ell(us) = \vis^\ell(vs) \cat t \cat \vis^\ell(vs')$.  
So we get $ins\in \pknow^\ell(ts)$ by instantiating the definition of $\pknow^\ell(ts)$
with $us,t,vs,hs:=vs,t,vs',gs$.
\end{proof}

\lemexhaustive*
\begin{proof}
For mutual exclusivity see the Rocq proof.  Here we show existence.
To have a suitable induction hypothesis we show the slightly stronger result 
that in the conformance case the redex of the last configuration in the 
minor pre-run is an $\ell$-annotation only if the last redex in the  major pre-run is 
also an $\ell$-annotation. We gloss over that in this sketch.

Consider any $\ell$ and $ins$, and go by induction on $gs$.  
The idea is to explore the minor pre-run $hs$ determined by $ins$.

In the base case, $gs$  is $[ \rconfig{\initState} ]$; we take 
$hs := [ \rconfig{\initState} ]$ and $\alpha := \{ (0,0) \}$. This forms a conformance.

In the induction case, $gs$ has the form $fs\cat g$
and by induction we have $hs,\alpha$ with $(fs,hs,\alpha)$ a conformance, fiat, or failure for $\ell,ins$.
If it is a fiat or failure then so is $(fs\cat g,hs,\alpha)$, by Lemma~\ref{lem:extendFail}, and we are done.
If $ins$ has been exhausted, i.e., $(fs,hs,\alpha)$ is a conformance
such that $\inputs(hs)=ins$ and $\last(hs)$ is receptive, then
$(fs\cat g,hs,\alpha)$ is a conformance and we are done.
It remains to consider the case that $(fs,hs,\alpha)$ is a conformance such that $\alpha$ covers $fs$
and either $\inputs(hs)\prefix ins$ or $\last(hs)$ is active.
We proceed by cases of $\last(fs)$.

\begin{itemize}
\item If $\last(fs)$ is receptive (in which case $\inputs(hs) \prefix ins$) and the transition to $g$ is for a visible input
on some $\ell'\lleq\ell$, then (because the program is  well specified) $\redex(g)$ is $\codef{assume}^{\ell'}~\Agr x$ for some $x$.
We construct $hs'\suffixeq hs$ by successive steps,
maintaining $\inputs(hs') \prefixeq ins$, until
$\redex(\last(hs'))$ is an $\ell$-annotation, or $ins$ is exhausted and a receptive configuration is reached, or there is divergence without reaching either.  

  \begin{itemize}
  \item If $\inputs(hs')=ins$ and $\last(hs')$ is receptive,
        then $(fs\cat g, hs', \beta)$ is a conformance, where 
        $\beta = \alpha \union \{ (\len{fs}-1, j) \mid \len{hs}\leq j \prefixeq \len{hs'} \}$.
  \item If an $\ell$-annotation is reached at $\last(hs')$ then there are three sub-cases:

    \begin{itemize}

    \item If the annotation $\redex(\last(hs'))$ is the same as $\redex(g)$, and 
$g\mid \last(hs')\models\Agr x$ (i.e., $\state(g),\state(\last(hs'))$ agree on $x$) 
      then $(fs\cat g, hs', \gamma)$ is a conformance, where
      $\gamma = \alpha \union \{ (\len{fs}-1, j) \mid \len{hs} \leq j < \len{hs'}-1 \}
                       \union \{ (\len{fs}, \len{hs'}-1 ) \}$.
    \item If the annotation is the same as $\redex(g)$ but the states do not agree on $x$,
      then $(fs\cat g, hs', \delta)$ is an assumption fiat
or assertion failure, where 
      $\delta = \alpha \union \{ (\len{fs}-1, j) \mid \len{hs} \leq j < \len{hs'}-1 \}
      \union \{ (\len{fs}, \len{hs'}-1 ) \}$.
    \item If $\redex(\last(hs'))$ differs from $\redex(g)$ 
      then $(fs\cat g, hs', \delta)$ is an alignment failure, where $\delta$ is as in the preceding bullet.  
    \end{itemize}
  \item If neither $ins$ is exhausted nor an $\ell$-annotation reached while growing $hs'$,
    then the minor pre-run is diverging without 
    doing further visible input or output.  So $(fs\cat g, hs,\alpha)$ is a divergence fiat.
  \end{itemize}

\item If $\last(fs)$ is receptive and the transition to $g$ is for a non-visible input,
so $\redex(g)$ is an assumption for some $\ell'\nlleq\ell$,
then $(fs\cat g, hs, \alpha \union \{ (\len{fs},\len{hs}-1) \} )$ is a conformance.
Here we use that $\redex(\last(fs))$ cannot be an annotation, so neither can $\redex(\last(hs))$
(recall Remark~\ref{rem:one}).

\item If $\last(fs)$ is active, we have these sub-cases:
  \begin{itemize}
  \item If $\redex(g)$ is not an $\ell$-annotation then 
    $(fs\cat g, hs, \alpha\union \{ (\len{fs},\len{hs}-1) \})$ is a conformance
    that covers $fs\cat g$.
  \item If $\redex(g)$ is an $\ell$-annotation then construct $hs'\suffixeq hs$ by successive steps,
    maintaining $\inputs(hs') \prefixeq ins$, until $\redex(\last(hs'))$ is an $\ell$-annotation.
    \begin{itemize}
    \item If this is not possible because $\inputs(hs')=ins$ and $\last(hs')$ is receptive,
      then $(fs\cat g, hs', \beta)$ is a conformance, where $\beta$ is defined like earlier in this proof.
      (That is, $\beta = \alpha \union \{ (\len{fs}-1, j) \mid \len{hs}\leq j < \len{hs'} \}$.)
    \item If an $\ell$-annotation is reached, but $\redex(\last(hs'))$ is different from $\redex(g)$,
      then we obtain an alignment failure.
    \item If $\redex(\last(hs'))$ is the same as $\redex(g)$, 
and the annotation's formula is $\Phi$
such that $g,\last(hs')\models\Phi$, 
      then $(fs\cat g, hs',\gamma)$ is a conformance, where
      $\gamma = \beta \union  \{ (\len{fs}, \len{hs'}-1) \}$
      for $\beta$ from above. 
    \item If $\redex(\last(hs'))$ is the same as $\redex(g)$ but the formula does not hold, 
      we obtain an assumption fiat or assertion failure.
    \item If none of the above apply, then we obtain divergence fiat
      at $i,k$ where $k=\len{fs}$ and $i$ is the index of the last $\ell$-annotation before $k$ in $fs$, truncating the minor pre-run where it matches $i$ and restricting $\alpha$ accordingly. 
    \end{itemize}
  \end{itemize}
\end{itemize}
\end{proof}

\thmsafesecure*
\begin{proof}
Consider any $\ell$.
In accord with the definitions, we consider an arbitrary pre-run and go by induction on it.

The base case is the shortest pre-run, $[\rconfig{\initState}]$, which is secure
by definition: it has no prefixes $hs\cat g$ with $hs$ nonempty.

For the induction step, consider a pre-run $gs\cat g$ that is safe.
Safe pre-runs are prefix closed (Lemma~\ref{lem:contractSafe}),
so $gs$ is safe, hence $gs$ is secure by induction hypothesis.
So to prove $gs\cat g$ is secure it remains to consider the last step.
We must show
\begin{equation}\label{eq:seccond}
\know^\ell(ts\cat t) \supseteq \pknow^\ell(ts) \intersect \RP^\ell(gs\cat g)
\end{equation}
for the unique (Lemma~\ref{lem:traceDetEv}) $ts,t$ that satisfy 
\begin{equation}\label{eq:gs}
gs \events ts \qquad
gs\cat g\events ts\cat t 
\end{equation}
By Lemma \ref{lem:knowlSame}(a), we have
$\know^\ell(ts\cat t) =  \pknow^\ell(ts)$ unless $t$ is $\ell$-visible,
and this proves (\ref{eq:seccond}) for all transitions except for visible $t\in \Ve^\ell$.

To prove (\ref{eq:seccond}) for $t\in \Ve^\ell$,
consider any list of inputs $ins$.
By safety of $gs\cat g$ for $\ell,ins$ we have $\fs ,\alpha$
such that 
$(gs\cat g, \fs , \alpha)$ is a conformance, assumption fiat, or divergence fiat for $\ell,ins$.
We must prove that 
\[ ins\in \pknow^\ell(ts) \mbox{ and } ins\in \RP^\ell(gs\cat g) \mbox{ imply } ins \in \know^\ell(ts\cat t) \]
either by refuting one of the antecedents or by showing the consequent.

By definitions, $ins\in \pknow^\ell(ts)$ means there are
$hs,us,ws$ and $u\in\Ve^\ell$ with 
\begin{equation}\label{eq:progress}
ins = \inputs(us\cat u\cat ws) \qquad
hs\events us \cat u \cat ws \qquad
\vis^\ell(ts)\prefixeq \vis^\ell(us)
\end{equation}
Also, the goal $ins \in \know^\ell(ts\cat t)$ means there are $\fs ',vs$ such that
\begin{equation}\label{eq:show}
ins = \inputs(vs) \qquad
\fs '\events vs \qquad 
\vis^\ell(ts\cat t) \prefixeq \vis^\ell(vs) 
\end{equation} 
There are  quite a number of variables in play so let us review the situation.
The major pre-run $gs\cat g$ has trace $ts\cat t$.  
The minor pre-run $hs$, with trace $us\cat u$, witnesses that inputs $ins$ were possible according to prior knowledge, because $\vis^\ell(ts) \prefixeq \vis^\ell(us)$.  
Safety of $gs\cat g$ accounts for the inputs $ins$ by a pre-run $\fs$ that is either in conformance or is ruled out by assumption or divergence fiat.  

We complete the proof by cases on whether 
$(gs\cat g, \fs , \alpha)$ is a conformance, assumption fiat, or divergence fiat.
Conformance lets us derive $\fs '$ such that (\ref{eq:show}) holds.
Assumption fiat lets us refute the policy hypothesis  $ins\in \RP^\ell(gs\cat g)$.
Divergence fiat refutes the progress hypothesis $ins\in \pknow^\ell(ts)$.
Here are the details.

\textbf{Case conformance:}
Suppose $(gs\cat g, \fs , \alpha)$ is a conformance for $\ell,ins$.
So $\alpha$ covers $\fs$ and $\inputs(fs)\prefixeq ins$.
We go by the two cases of conformance: either $\alpha$ covers $gs\cat g$ or
$ins=\inputs(\fs)$ and $\last(\fs)$ is receptive.

Suppose $\alpha$ covers $gs\cat g$.  
By coverage and conformance, $\vis^\ell(\fs)=\vis^\ell(ts\cat t)$.
(In particular, if $t$ is an input then $\redex(g)$ is an assumption
about the input value and if $t$ is an output then a preceding assertion ensures 
agreement on the output value; either way, $\last(\vis^\ell(\fs))$ is $t$.)
By $\inputs(\fs)\prefixeq ins$ (by conformance)
and $ins=\inputs(hs)$ (by (\ref{eq:progress})), using determinacy we can extend $\fs$ to
$\fs'\suffixeq \fs$ with $ins=\inputs(vs)$ where $vs$ are the events of $\fs'$, i.e., $\fs'\events vs$.
So $\vis^\ell(ts\cat t)\prefixeq\vis^\ell(vs)$ and (\ref{eq:show}) holds for $\fs',vs$.

Suppose $\alpha$ does not cover $gs\cat g$ but $ins=\inputs(\fs)$ and $\last(\fs)$ is receptive.
Let $vs$ satisfy $\fs\events vs$. To show (\ref{eq:show}) instantiated  with $\fs',vs:=\fs,vs$, it remains to show $\vis^\ell(ts\cat t)\prefixeq\vis^\ell(vs)$. 
Since $\last(\fs)$ is receptive, $\fs$ is the maximal run on $ins$, so from (\ref{eq:progress})
using determinacy we have $us\cat u\cat ws\prefixeq \fs$.
By conformance the visible annotations of $\fs$ align with those of $gs$ since 
we consider the case where $g$ isn't covered, so $u$ must be $t$ and we get
$\vis^\ell(ts\cat t)\prefixeq\vis^\ell(vs)$. 

\textbf{Case divergence fiat:} 
Suppose $(gs\cat g, \fs , \alpha)$ is a divergence fiat for $\ell,ins$.
Suppose the divergence fiat is at $i,k$ so that
$((gs\cat g)\take i, \fs, \alpha)$ is an $\ell$-conformance for $ins$ 
but $\redex((gs\cat g)_k)$ is an $\ell$-annotation and neither $\fs$ nor any of its continuations
reach an $\ell$-annotation or $\ell$-output as redex.
Thus pre-run $\fs$ and its continuations are not doing further visible events following $\last(\inputs(\fs ))$.
Owing to $\inputs(\fs )\prefixeq ins$ and determinacy, 
if the progress condition (\ref{eq:progress}) holds  
then the handler for $\last(\inputs(\fs ))$ progresses at least as far as event $u$.
So divergence fiat contradicts the progress hypothesis $ins\in \pknow^\ell(ts)$.

\textbf{Case assumption fiat:}
Suppose $(gs\cat g, \fs , \alpha)$ is an assumption fiat for $\ell,ins$.
Suppose the assumption fiat is at $i,j$ 
where $j=\len{fs}-1$.
Thus 
$((gs\cat g)\take i,\fs \take j,\alpha)$ is a conformance
and $\redex((gs\cat g)_i)$ is an $\ell$-assumption of formula $\Phi$
(and so is $\redex(\fs_j)$) but 
$(gs\cat g)_i | \fs_j \not\models \Phi$.
Let $ins'$ be $\inputs(\fs \take j)$ 
hence $ins' = \inputs(\fs \take (j+1)) = \inputs(\fs)$,
because that step is not an input.
Note $ins'\prefixeq ins$.
We now show that the policy hypothesis $ins \in \RP^\ell(gs\cat g)$ yields a contradiction.
Suppose $hs'$ witnesses $ins \in \RP^\ell(gs\cat g)$.
So $ins = \inputs(us')$ for the $us'$ with $hs' \events us'$,
and $\SC^\ell((gs\cat g),hs',\beta)$ for some $\beta$.  
But $\inputs(ins)$ determines $hs'$ 
so we have $fs\prefixeq hs'$ and the condition 
$(gs\cat g)_i | \fs_j \not\models \Phi$
contradicts $\SC^\ell((gs\cat g),hs',\beta)$.   
\end{proof}

\end{document}